\newcommand{\mg}{\mathrm \Gamma}
\newcommand{\mcx}{\mathcal X}
\newcommand{\mca}{\mathcal A}
\newtheorem{assumption}{\textbf{Assumption}}
\newtheorem{lemma}{\textbf{Lemma}}
\newtheorem{theorem}{\textbf{Theorem}}
\newcommand{\nn}{\nonumber}
\newcommand{\mE}{\mathbb{E}}
\newcommand{\mP}{\mathbb{P}}
\newcommand{\mR}{\mathbb{R}}
\newcommand{\bdelta}{\mathbf\Delta}
\newcommand{\proj}{\text{proj}_{2,R}}
\newcommand{\ltwo}[1]{\left\|#1\right\|_2}
\title{Finite-Sample Analysis for SARSA with Linear Function Approximation}
\author{%
	  Shaofeng Zou \\
	  Department of Electrical Engineering\\
	 University at Buffalo, The State University of New York\\
	  Buffalo, NY 14228 \\
	  \texttt{szou3@buffalo.edu} \\
	  \AND
	 Tengyu Xu \\
	 Department of ECE\\
	 The Ohio State University \\
	 Columbus, OH 43210 \\
	 \texttt{xu.3260@osu.edu} \\
	 \And
	 Yingbin Liang \\
	 Department of ECE\\
	 The Ohio State University \\
	 Columbus, OH 43210 \\
	 \texttt{liang.889@osu.edu}
}
\begin{document}

	\maketitle
	
	\begin{abstract}
		SARSA is an on-policy algorithm to learn a Markov decision process policy in reinforcement learning. We investigate the SARSA algorithm with linear function approximation under the non-i.i.d.\ data, where a single sample trajectory is available. With a Lipschitz continuous policy improvement operator that is smooth enough, SARSA has been shown to converge asymptotically \cite{perkins2003convergent,melo2008analysis}. However, its non-asymptotic analysis is challenging and remains unsolved due to the non-i.i.d.  samples and the fact that the behavior policy changes dynamically with time. In this paper, we develop a novel technique to explicitly characterize the stochastic bias of a type of stochastic approximation procedures with time-varying Markov transition kernels. Our approach enables non-asymptotic convergence analyses of this type of stochastic approximation algorithms, which may be of independent interest. Using  our bias characterization technique and a  gradient descent type of analysis, we  provide the finite-sample analysis on the  mean square error of the SARSA algorithm.  We then further study a fitted SARSA algorithm, which includes the original SARSA algorithm and its variant in \cite{perkins2003convergent} as special cases. This fitted SARSA algorithm provides a more general framework for \textit{iterative} on-policy fitted policy iteration, which is more memory and computationally efficient. For this fitted SARSA algorithm, we also provide its finite-sample analysis. 
		
	\end{abstract}
	
	\section{Introduction}
	
	SARSA, originally proposed in \cite{Rummery1994}, is an on-policy reinforcement learning algorithm, which continuously updates the behavior policy towards attaining as large an accumulated reward as possible over time. Specifically, SARSA is initialized with a state and a policy. At each time instance, it takes an action based on the current policy, observes the next state, and receives a reward. Using the newly observed information, it first updates the estimate of the action-value function, and then improves the behavior policy by applying a policy improvement operator, e.g., $\epsilon$-greedy, to the estimated action-value function. Such a process is iteratively taken until it converges (see Algorithm \ref{al:1} for a precise description of the SARSA algorithm). 
	
	With the tabular approach that stores the action-value function, the convergence of SARSA has been established in \cite{singh2000convergence}. However, the tabular approach may not be applicable when the state space is large or continuous. For this purpose, SARSA that incorporates parametrized function approximation is commonly used, and is more efficient and scalable. With the function approximation approach, SARSA is not guaranteed to converge in general when the $\epsilon$-greedy or softmax policy improvement operators are used  \cite{gordon1996chattering,de2000existence}. However, under certain conditions, its convergence can be established.
	For example, a variant of SARSA with linear function approximation was constructed in \cite{perkins2003convergent}, where between two policy improvements, a temporal difference (TD) learning algorithm is applied to learn the action-value function till its convergence. 
	The convergence of this algorithm was established in  \cite{perkins2003convergent}  using  a contraction argument under the condition that the policy improvement operator is Lipschitz continuous and the Lipschitz constant is not too large. The convergence of the original SARSA algorithm under the same Lipschitz condition was later established using an O.D.E. approach in \cite{melo2008analysis}. 
	
	
	Previous studies on SARSA in \cite{perkins2003convergent,melo2008analysis} mainly focused on the asymptotic convergence analysis, which does not suggest how fast SARSA converges and how the accuracy of the solution depends on the number of samples, i.e., sample complexity. The goal of this paper is to provide such a non-asymptotic finite-sample analysis of SARSA 
	and to further understand how the parameters of the underlying Markov process and the algorithm affect the convergence rate. Technically, such an analysis does not follow directly from the existing finite-sample analysis for time difference (TD) learning \cite{bhandari2018finite,srikant2019} and Q-learning \cite{Shah2018}, where samples are taken by a Markov process with a fixed transition kernel. The analysis of SARSA necessarily needs to deal with samples taken from a Markov decision process with a time-varying transition kernel, and in this paper, we develop novel techniques to explicitly characterize the stochastic bias for a Markov decision process with a time-varying transition kernel, which may be of independent interest.
	
	%
	
	
	
	\subsection{Contributions}
	In this paper, we design a novel approach to analyze SARSA and a more general fitted SARSA algorithm, and develop the corresponding finite-sample error bounds. In particular, we consider the on-line setting where a single sample trajectory with Markovian noise  is available, i.e., samples are  not identical and independently distributed (i.i.d.). 
	
	\textbf{Bias characterization for time-varying Markov process.}  One major challenge  in our analysis is due to the fact that the estimate of the ``gradient'' is biased with non-i.i.d.\ Markovian noise. Existing studies mostly focus on the case where the samples are generated according to a Markov process with a {\em fixed} transition kernel, e.g., TD learning \cite{bhandari2018finite,srikant2019} and  Q-learning with nearest neighbors \cite{Shah2018}, so that the uniform ergodicity of the Markov process can be exploited to decouple the dependency  on the Markovian noise, and then to explicitly bound the stochastic bias. For Markov processes with a {\em time-varying} transition kernel, such a property of uniform ergodicity does not hold in general. In this paper, we develop a novel approach to explicitly characterize the stochastic bias induced by non-i.i.d.\ samples generated from  Markov processes with {\em time-varying} transition kernels. The central idea of our approach is to construct 
	auxiliary Markov chains, which are uniformly ergodic, to approximate the dynamically changing Markov process to facilitate the analysis.
	Our approach can also be applied more generally to analyze stochastic approximation (SA) algorithms with time-varying Markov transition kernels, which may be of independent interest.

	\textbf{Finite-sample analysis for on-policy SARSA.} For the on-policy SARSA algorithm, as the estimate of the action-value function changes with time, the behavior policy also changes. By a gradient descent type of analysis \cite{bhandari2018finite} and our bias characterization technique for analyzing time-varying Markov processes, we develop the  finite-sample analysis for the on-policy SARSA algorithm with a continuous state space and linear function approximation. Our analysis  is for the on-line case with a single sample trajectory and non-i.i.d.\ data. 
	To the best of our knowledge, this is the first finite-sample analysis for this type of on-policy algorithm with time-varying behavior policy.

	\textbf{Fitted SARSA algorithm.} We propose a more general on-line fitted SARSA algorithm, where between two policy improvements, a ``fitted'' step is taken to obtain a more accurate estimate of the action-value function of the corresponding behavior policy via multiple iterations rather than taking only a single iteration as in the original SARSA. 
	In particular, it includes the  variant of SARSA in \cite{perkins2003convergent} as a special case, in which each fitted step is required to converge before doing policy improvement.
	We provide a non-asymptotic analysis  for the convergence of the proposed algorithm. Interestingly, our analysis indicates that the fitted step can stop at any time (not necessarily until convergence) without affecting the overall convergence of the fitted SARSA algorithm. 
	

	\subsection{Related Work}

	\textbf{Finite-sample analysis for TD learning.} The asymptotic convergence of the TD algorithm was established in \cite{Tsitsiklis1997}. The finite-sample analysis of the TD algorithm was provided in \cite{Dalal2018a,Laksh2018} under the i.i.d.\ setting and in  \cite{bhandari2018finite,srikant2019} recently under the non-i.i.d.\ setting, where a single sample trajectory is available. The finite sample analysis for the two-time scale methods for TD learning was also studied very recently under i.i.d. setting in \cite{dalal2017finite}, under non-i.i.d. setting with constant step sizes in \cite{gupta2019finite}, and under non-i.i.d. setting with diminishing step sizes in \cite{xu2019two}. 
	Differently from TD, the goal of which is to estimate the value function of a fixed policy, SARSA aims to continuously update its estimate of the action-value function to obtain an optimal policy. While samples of the TD algorithm are generated by following a \textit{time-invariant} behavior policy, the behavior policy that generates samples in SARSA follows from an instantaneous estimate of the action-value function, which \textit{changes over time}.

	\textbf{Q-learning with function approximation.} The asymptotic convergence of Q-learning with linear function approximation was established in \cite{melo2008analysis} under certain conditions. An approach based on a combination of Q-learning and kernel-based nearest neighbor regression was proposed in \cite{Shah2018} which first discretize the entire state space, and then use the nearest neighbor regression method to estimate the action-value function. Such an approach was shown to converge, and a finite-sample analysis of the convergence rate was further provided. 
	Q-learning algorithms in \cite{melo2008analysis,Shah2018} are  off-policy algorithms, where a fixed behavior policy is used to collect samples, whereas SARSA is an on-policy algorithm with a time-varying behavior policy. Moreover, differently from the nearest neighbor approach, we consider SARSA with  linear function approximation. These differences require different techniques to characterize the non-asymptotic convergence rate. 
	
	\textbf{On-policy SARSA algorithm.} SARSA was originally proposed in \cite{Rummery1994}, and using the tabular approach its convergence  was established in \cite{singh2000convergence}. With function approximation, SARSA is not guaranteed to converge if $\epsilon$-greedy and softmax are used. With a smooth enough Lipschitz continuous policy improvement operator, the asymptotic convergence of SARSA was shown in \cite{melo2008analysis,perkins2003convergent}. In this paper, we further develop the non-asymptotic finite-sample analysis for SARSA under the Lipschitz continuous condition.

	\textbf{Fitted	value/policy iteration algorithms.} The least-squares temporal difference learning (LSTD) algorithms have been extensively studied in \cite{Brad1996,Boyan2002,Munos2008,Lazaric2010,Ghav2010,Pires2012,Prash2013,Tagorti2015,Tu2018} and references therein, where in each iteration  a least square regression problem based on a batch data is solved.  Approximate (fitted) policy iteration (API) algorithms further extend fitted value iteration with policy improvement. Several variants were studied, which adopt different objective functions, including least-squares policy iteration (LSPI) algorithms in \cite{Lagou2003,Lazaric2012,Yang2019}, fitted policy iteration based on Bellman residual minimization (BRM) in \cite{Antos2008,Farah2010}, and classification-based policy iteration algorithm in \cite{Lazaric2016}. 
	The fitted SARSA algorithm in this paper uses an \textit{iterative} way (TD(0) algorithm) to estimate the action-value function between two policy improvements, which is more memory and computationally efficient than the batch method. Differently from \cite{perkins2003convergent}, we do not require a convergent TD(0) run for each fitted step.  For this algorithm, we provide its non-asymptotic convergence analysis.




	\section{Preliminaries}
	
	\subsection{Markov Decision Process}
	Consider a general reinforcement learning setting, where an agent interacts with a stochastic environment, which is modeled as a Markov decision process (MDP). Specifically, we consider a MDP that consists of $(\mcx, \mca, \mathsf{P},r,\gamma)$, where $\mcx$ is a \textit{continuous} state space $\mcx\subset \mR^d$, and $\mca$ is a finite action set. We further let $X_t\in \mcx$ denote the state at time $t$, and $A_t\in\mca$ denote the action at time $t$. Then, the measure $\mathsf{P}$ defines the action dependent transition kernel for the underlying Markov chain $\{X_t\}_{t\geq0}$:
	$
	\mP(X_{t+1}\in U|X_t=x,A_t=a)=\int_U\mathsf P(dy|x,a),
	$
	for any measurable set $U\subseteq\mcx$.
	The one-stage reward at time $t$ is given by $r(X_t,A_t)$, where $r: \mcx\times\mca\rightarrow \mR$ is the reward function,
	and is assumed to be uniformly bounded, i.e., $r(x,a)\in[0,r_{\max} ],$ for any $(x,a)\in\mcx\times\mca$. Finally, $\gamma$ denotes the discount factor.
	
	A stationary policy maps a state $x\in\mcx$ to a probability distribution $\pi(\cdot|x)$ over $\mca$, which does not depend on time. For a policy $\pi$, the corresponding value function $V^\pi: \mcx\rightarrow\mR$ is defined as the expected total discounted reward 
	obtained by actions executed according to $\pi$:
	$
	V^\pi\left(x_0\right)=\mE[\sum_{t=0}^{\infty}\gamma^t r(X_t,A_t)|X_0=x_0].
	$
	The action-value function $Q^\pi:\mcx\times\mca\rightarrow\mR$ is defined as
	$
	Q^\pi(x,a)=r(x,a)+\gamma\int_\mcx \mathsf P(dy|x,a)V^\pi(y).
	$
	The goal is to find an optimal policy that maximizes the value function from any initial state. The optimal value function is defined as 
	$
	V^*(x)=\sup_\pi V^\pi(x), \,\forall x\in\mcx.
	$
	The optimal action-value function is defined as 
	$
	Q^*(x,a)=\sup_\pi Q^\pi(x,a), \,\forall (x,a)\in \mcx\times\mca.
	$
	The optimal policy $\pi^*$ is then greedy with respect to $Q^*$. It can be verified that $Q^*=Q^{\pi^*}$.
	The Bellman operator $\mathbf H$ is defined as
	$
	(\mathbf HQ)(x,a)=r(x,a)+\gamma\int_\mcx \max_{b\in\mca}Q(y,b)\mathsf{P}(dy|x,a).
	$
	It is clear that $\mathbf H$ is contraction in the sup norm defined as  $\|Q\|_{\sup}=\sup_{(x,a)\in\mcx\times\mca}|Q(x,a)|$, and the optimal action-value function $Q^*$ is the fixed point of $\mathbf H$ \cite{bertsekas2011dynamic}. 
	\subsection{Linear Function Approximation}
	Let $\mathcal Q=\{Q_\theta:\theta\in\mR^N\}$ be a family of real-valued functions defined on $\mcx\times\mca$. 
	We consider the problem where any function in $\mathcal Q$ is a linear combination of a  set of $N$ fixed  functions $\phi_i: \mcx\times\mca\rightarrow\mR$ for $i=1,\ldots,N$. Specifically, for $\theta\in\mR^N$, $Q_\theta(x,a)=\sum_{i=1}^N \theta_i\phi_i(x,a)=\phi^T(x,a)\theta.$
	We assume that $\|\phi(x,a)\|_2\leq 1$,  $\forall (x,a)\in\mcx\times\mca$, which can be ensured by normalizing $\{\phi_i\}_{i=1}^N$.
	The goal is to find a $Q_\theta$ with a compact representation in $\theta$ to approximate the optimal action-value function $Q^*$ with a continuous state space.

	\section{Finite-Sample Analysis for SARSA}
	
	\subsection{SARSA with Linear Function Approximation}
	We consider a $\theta$-dependent behavior policy, which changes with time. Specifically, the behavior policy $\pi_{\theta_t}$ is given by $\mg(\phi^T(x,a)\theta_t)$, where $\mg$ is a policy improvement operator, e.g., greedy, $\epsilon$-greedy, softmax and mellowmax \cite{Asadi2016}.
	Suppose that $\{x_t,a_t,r_t\}_{t\geq0}$ is a sample trajectory of states, actions and rewards obtained from the MDP following the time dependent behavior policy $\pi_{\theta_t}$ (see Algorithm \ref{al:1}). The projected SARSA with linear function approximation updates as follows:
	\begin{flalign}\label{eq:updatesarsa}
	\theta_{t+1}&=\proj(\theta_t+\alpha_t g_t(\theta_t)),
	\end{flalign}
	where $g_t(\theta_t)=\nabla_\theta Q_\theta(x_t,a_t)\bdelta_t=\phi(x_t,a_t)\bdelta_t$,  $\bdelta_t$ denotes the temporal difference at time t:
	$
	\bdelta_t=r(x_t,a_t)+\gamma\phi^T(x_{t+1},a_{t+1})\theta_t-\phi^T(x_t,a_t)\theta_t,$
	and $\proj(\theta):=\arg\min_{\theta':\|\theta'\|_2\leq R}\|\theta-\theta'\|_2.$ In this paper, we refer to $g_t$ as "gradient", although it is not a gradient of any function.

	\begin{algorithm}[!htb]
		\caption{SARSA}\label{al:1}
		\begin{algorithmic}
			\State \textbf{Initialization:}
			\State $\theta_0$, $x_0$, $R$, $\phi_i$, for $i=1,2,...,N$
			\State \textbf{Method:}
			\State $\pi_{\theta_0}\leftarrow\mg(\phi^T\theta_0)$
			\State {Choose $a_0$ according to $\pi_{\theta_0}$}
			\For {$t=1,2,...$}
			\State Observe $x_t$ and $r(x_{t-1},a_{t-1})$
			\State Choose $a_t$ according to $\pi_{\theta_{t-1}}$
			\State $\theta_t\leftarrow\proj (\theta_{t-1}+\alpha_{t-1}g_{t-1}(\theta_{t-1}))$
			\State  \textbf{Policy improvement}: $\pi_{\theta_t}\leftarrow\mg(\phi^T\theta_t)$
			\EndFor
		\end{algorithmic}
	\end{algorithm}

	Here, the projection step is to control the norm of the gradient $g_t(\theta_t)$, which is a commonly used technique to control the gradient bias \cite{bhandari2018finite,kushner2010stochastic,lacoste2012simpler,bubeck2015convex,nemirovski2009robust}. With a small step size $\alpha_t$ and a bounded gradient, $\theta_t$ does not change too fast.
	We note that \cite{gordon2000reinforcement} showed that  SARSA  converges to a bounded region, and thus $\theta_t$ is bounded for all $t\geq 0$. This implies that our analysis still holds without the projection step. We further note that even without exploiting the fact that $\theta_t$ is bounded, the finite-sample analysis for SARSA can still be obtained by combining our approach of analyzing the stochastic bias with an extension of the approach in \cite{srikant2019}. However, to convey the central idea of characterizing the stochastic bias of a MDP with dynamically changing transition kernel, we focus on the projected SARSA in this paper. 
	
	We consider the following Lipschitz continuous  policy improvement operator $\mg$ as  in \cite{perkins2003convergent,melo2008analysis}. For any $\theta\in\mR^N$, the behavior policy $\pi_\theta=\mg(\phi^T\theta)$ is Lipschitz with respect to $\theta$: $ \forall (x,a)\in\mcx\times\mca,$
	\begin{flalign}\label{eq:Lpi}
	|\pi_{\theta_1}(a|x)-\pi_{\theta_2}(a|x)|\leq {C\|\theta_1-\theta_2\|_2},
	\end{flalign}
	where ${C}>0$ is the Lipschitz constant. Further discussion about this assumption and its impact on the convergence is provided in Section \ref{sec:discussion}.
	We further assume that for any fixed $\theta\in\mR^N$, the Markov chain $\{X_t\}_{t\geq0}$ induced by the behavior policy $\pi_{\theta}$ and the transition kernel $\mathsf P$  is uniformly ergodic with the invariant measure denoted by $\mathsf{P}_{\theta}$, and satisfies the following assumption.
	\begin{assumption}\label{assumtion1}
		There are constants $m>0$ and $\rho\in(0,1)$ such that
		\begin{flalign*}
		\sup_{x\in\mcx}d_{TV}(\mP(X_t\in\cdot|X_0=x),\mathsf{P}_\theta)\leq m\rho^t, \forall t\geq 0,
		\end{flalign*}
		where $d_{TV}(P,Q)$ denotes the total-variation distance between the probability measures $P$ and $Q$.
	\end{assumption}
	We denote by $\mu_{{\theta}}$  the probability measure  induced by the invariant measure $\mathsf{P}_{\theta}$ and the behavior policy $\pi_\theta$. We assume that the $N$ base functions $\phi_i$'s are linearly independent in the Hilbert space $L^2(\mathcal X\times\mathcal A, \mu_{\theta^*})$,  where $\theta^*$ is the limit point of Algorithm \ref{al:1}, which will be defined in the next section. For the space $L^2(\mathcal X\times\mathcal A,\mu_{\theta^*})$, two measurable functions on $\mathcal X\times\mathcal A$ are equivalent if they are identical except on a set of $\mu_{\theta^*}$-measure zero.
	
	\subsection{Finite-Sample Analysis}\label{sec:231}
	We first define $A_{\theta}=\mE_{\theta}[\phi(X,A)(\gamma\phi^T(Y,B)-\phi^T(X,A))]$, and $b_{\theta}=\mE_{\theta}[\phi(X,A)r(X,A)]$, where $\mE_{\theta}$ denotes the expectation where $X$ follows the invariant probability measure $\mathsf P_{\theta}$, $A$ is generated by the behavior policy $\pi_{\theta}(A=\cdot|X)$, $Y$ is the subsequent state of $X$ following action $A$, i.e., $Y$ follows from the transition kernel $\mathsf{P}(Y\in\cdot|X,A)$, and $B$ is generated by the behavior policy $\pi_{\theta}(B=\cdot|Y)$. It was shown in \cite{melo2008analysis}  that the algorithm in \eqref{eq:updatesarsa} converges to a unique point $\theta^*$, which satisfies the following relation:
	$A_{\theta^*}\theta^*+b_{\theta^*}=0,$ if the Lipschitz constant $C$ is not so large that $(A_{\theta^*}+C\lambda I)$ is negative definite\footnote{It can be shown that if $\phi_i$'s are linearly independent in $L^2(\mathcal X\times\mathcal A,\mu_{\theta^*})$, then $A_{\theta^*}$ is negative definite \cite{perkins2003convergent,Tsitsiklis1997}. }.

	Let $G=r_{\max}+2R$ and  $\lambda=G|\mca|(2+\lceil\log_\rho\frac{1}{m}\rceil+\frac{1}{1-\rho})$. Recall in \eqref{eq:Lpi} that the policy $\pi_{\theta}$ is Lipschitz with respect to $\theta$ with Lipschitz constant  $C$. We then make the following assumption \cite{perkins2003convergent,melo2008analysis}.
	\begin{assumption}\label{assumption22}
		The Lipschitz  constant $C$ is not so large that $(A_{\theta^*}+C\lambda I)$ is negative definite, and denote the largest eigenvalue of $\frac{1}{2}\big((A_{\theta^*}+C\lambda I)+(A_{\theta^*}+C\lambda I)^T\big)$ by $-w_{s}<0$.
	\end{assumption}
	
	The following theorems present  the  finite-sample bound on the convergence of  SARSA with diminishing and constant step sizes.
	\begin{theorem}\label{thm:sarsa}
		Consider  SARSA with linear function approximation in Algorithm \ref{al:1} with $\|\theta^*\|_2\leq R$. Consider a decaying step size $\alpha_t=\frac{1}{2w(t+1)}$ for $t\geq 0$, where $w\leq w_s$.
		Under Assumptions \ref{assumtion1} and \ref{assumption22},  we have that
		\begin{flalign}
		\mE \|\theta_T-\theta^*\|_2^2\leq& 
		\frac{G^2(4C|\mca|G\tau_0^2+(12+2\lambda C)\tau_0+1)(\log T +1)}{{4{w}^2T}} + \frac{2G^2(\tau_0w+w+\rho^{-1})}{{w}^2T},
		\end{flalign}
		where $\tau_0=\min\{t\ge0:m\rho^t\leq \alpha_T\}$. For large $T$, $\tau_0\sim \log T$, and hence $\mE \|\theta_T-\theta^*\|_2^2\leq \mathcal{O}\left(\frac{\log^3T}{T}\right).$ Thus, to guarantee the accuracy $\mE[\ltwo{\theta_{T}-\theta^*}^2]\leq \delta$ for a small $\delta$, the overall sample complexity is given by $\mathcal O(\frac{1}{\delta}\log^3\frac{1}{\delta})$.
	\end{theorem}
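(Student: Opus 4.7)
The plan is to run a gradient-descent-style recursion in $\|\theta_t-\theta^*\|_2^2$, in the spirit of \cite{bhandari2018finite}, but with new bias control to accommodate the time-varying transition kernel. Since $\|\theta^*\|_2\le R$, the projection is nonexpansive at $\theta^*$, giving
\begin{flalign*}
\|\theta_{t+1}-\theta^*\|_2^2 \leq \|\theta_t-\theta^*\|_2^2 + 2\alpha_t\langle g_t(\theta_t),\theta_t-\theta^*\rangle + \alpha_t^2 G^2,
\end{flalign*}
where $\|g_t(\theta_t)\|_2\le G=r_{\max}+2R$ follows from $\|\phi\|_2\le1$ and $\|\theta_t\|_2\le R$. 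Taking expectations reduces the task to estimating $\mE[\langle g_t(\theta_t),\theta_t-\theta^*\rangle]$.

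I would decompose this inner product using $A_{\theta^*}\theta^*+b_{\theta^*}=0$ as
\begin{flalign*}
\langle g_t(\theta_t),\theta_t-\theta^*\rangle = \langle A_{\theta^*}(\theta_t-\theta^*),\theta_t-\theta^*\rangle + \langle g_t(\theta_t)-A_{\theta^*}\theta_t-b_{\theta^*},\theta_t-\theta^*\rangle.
\end{flalign*}
The first piece supplies negative drift through $A_{\theta^*}$. The second, the bias, I would split further into a \emph{policy-mismatch} part $\langle (A_{\theta_t}-A_{\theta^*})\theta_t+(b_{\theta_t}-b_{\theta^*}),\theta_t-\theta^*\rangle$, controlled by the Lipschitz condition \eqref{eq:Lpi}, and a \emph{Markovian-noise} part $\langle g_t(\theta_t)-A_{\theta_t}\theta_t-b_{\theta_t},\theta_t-\theta^*\rangle$. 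The $C\lambda I$ slack inside Assumption \ref{assumption22} is designed precisely to absorb the policy-mismatch term, leaving an effective drift of $-w_s\|\theta_t-\theta^*\|_2^2$.

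The Markovian-noise piece is the hard part, and it is where the paper's auxiliary-chain construction is essential: because the sample $(x_t,a_t,x_{t+1},a_{t+1})$ arises from a chain whose kernel has varied with $\theta_s$ for $s\le t$, one cannot directly invoke uniform ergodicity. I would freeze the behavior policy at $\pi_{\theta_{t-\tau_0}}$ over a mixing window of length $\tau_0=\min\{t:m\rho^t\le\alpha_T\}$, invoke Assumption \ref{assumtion1} to bring that frozen chain within $\alpha_T$ of its stationary measure, and pay a Lipschitz charge of order $C|\mca|\sum_{s=t-\tau_0}^{t-1}\alpha_s G$ for replacing the true evolving kernels over that window by the frozen one (the $|\mca|$ enters from summing \eqref{eq:Lpi} over actions when bounding the total-variation distance between the induced occupation measures). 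This yields a noise bound of the order $\tau_0 \alpha_{t-\tau_0} G^2 + \alpha_T G^2$, multiplied by $\|\theta_t-\theta^*\|_2$ and then split via Young's inequality into a quadratic piece absorbed by the drift and a constant piece.

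Putting the ingredients together I expect a recursion of the form
\begin{flalign*}
\mE\|\theta_{t+1}-\theta^*\|_2^2 \leq (1-2w_s\alpha_t)\,\mE\|\theta_t-\theta^*\|_2^2 + \alpha_t\cdot O\!\left(\tau_0\alpha_{t-\tau_0}G^2\right) + \alpha_t^2 G^2.
\end{flalign*}
With $\alpha_t=\frac{1}{2w(t+1)}$ and $w\le w_s$, the contraction factor is at most $\frac{t}{t+1}$; unrolling from $t=\tau_0$ to $T-1$ telescopes the product down to an $O(1/T)$ prefactor, and the noise and step-size sums contribute $\sum_t \tau_0 \alpha_t\alpha_{t-\tau_0}=O(\tau_0^2\log T/T)$ and $\sum_t \alpha_t^2=O(\log T/T)$ respectively. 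Substituting $\tau_0\sim\log T$ gives the stated bound and the $O(\log^3 T/T)$ rate. The principal obstacle is the Markovian-noise estimate; once the auxiliary-chain argument supplies the right dependence on $\tau_0$ and on $C$, the remaining unrolling of the linearly contracting recursion is standard.
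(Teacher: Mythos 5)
Your proposal follows the paper's route essentially step for step: the non-expansive-projection recursion, the split of $\langle g_t(\theta_t),\theta_t-\theta^*\rangle$ into a drift term through $A_{\theta^*}$, a policy-mismatch term absorbed by the $C\lambda I$ slack (this is exactly the paper's Lemma \ref{lemma:a10}, proved via a perturbation bound on the stationary measures), and a Markovian-noise term $\mathbf\Lambda_t(\theta_t)$ handled by freezing the policy at $\pi_{\theta_{t-\tau_0}}$ and invoking uniform ergodicity of the frozen chain. (Your Young's-inequality packaging of the noise is unnecessary --- the paper simply bounds $\|\theta_t-\theta^*\|_2\le 2R$ and keeps the noise as an additive constant --- but that is cosmetic.)

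The one genuine gap is in your coupling estimate between the true chain and the frozen auxiliary chain. You charge a single Lipschitz penalty $C|\mca|G\sum_{s=t-\tau_0}^{t-1}\alpha_s\sim C|\mca|G\,\tau_0\,\alpha_{t-\tau_0}$ for ``replacing the true evolving kernels over the window by the frozen one.'' But the total-variation distance between the two chains' laws at time $t$ cannot be bounded by a single one-step discrepancy: it must be accumulated recursively over the window, because at each step $j\in\{t-\tau_0,\dots,t-1\}$ the true chain selects its action with $\pi_{\theta_{j-1}}$, which has already drifted by up to $G\sum_{i=t-\tau_0}^{j-2}\alpha_i$ from $\theta_{t-\tau_0}$, and these per-step TV increments add up along the window (this is the recursion in \eqref{eq:a42}). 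The result is the double sum $C|\mca|G\sum_{j=t-\tau_0}^{t-1}\sum_{i=t-\tau_0}^{j}\alpha_i=O\!\left(C|\mca|G\,\tau_0^2\,\alpha_{t-\tau_0}\right)$, i.e.\ an extra factor of $\tau_0$ relative to your estimate. This is precisely the origin of the $4C|\mca|G\tau_0^2$ term in the theorem and of one of the three $\log T$ factors; with your single-sum charge the final rate would come out as $\log^2 T/T$, and your subsequent claim that $\sum_t\tau_0\alpha_t\alpha_{t-\tau_0}=O(\tau_0^2\log T/T)$ does not follow from the per-step bound you state --- it silently reinstates the missing factor. Once the coupling is done recursively, the rest of your unrolling matches the paper's proof.
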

	
	Theorem \ref{thm:sarsa} indicates that SARSA has a faster convergence rate than the existing finite-sample bound for Q-learning with nearest neighbors \cite{Shah2018}. 
	

	\begin{theorem}\label{thm:sarsa_constantstepsize}
		Consider  SARSA with linear function approximation in Algorithm \ref{al:1} with  $\ltwo{\theta^*}\leq R$.  Under Assumptions  \ref{assumtion1} and \ref{assumption22} and  with a constant step size $\alpha_t=\alpha_0<\frac{1}{2w_s}$ for $ t>0$, we have that 
		\begin{flalign}
		\mE \|\theta_T-\theta^*\|_2^2\leq&e^{-2\alpha_0w_sT}\mE \|\theta_0-\theta^*\|_2^2 +\frac{\alpha_0G^2((12+2\lambda C)\tau_0+4GC|\mca|\tau_0^2+8/\rho+1)}{2w_s},
		\end{flalign}
		where $\tau_0=\min\{t\ge0:m\rho^t\leq \alpha_0\}$.
	\end{theorem}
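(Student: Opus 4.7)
The plan is to follow a gradient–descent style analysis, mirroring the structure used for Theorem~\ref{thm:sarsa} but simplifying the recursion exploit of the constant step size. Let $y_t := \mE\|\theta_t-\theta^*\|_2^2$. Since $\|\theta^*\|_2 \le R$, the projection $\proj$ is non-expansive with respect to $\theta^*$, so starting from the update \eqref{eq:updatesarsa} I would write
\begin{align*}
\|\theta_{t+1}-\theta^*\|_2^2 \;\le\; \|\theta_t-\theta^*\|_2^2 + 2\alpha_0\langle g_t(\theta_t),\,\theta_t-\theta^*\rangle + \alpha_0^2\|g_t(\theta_t)\|_2^2,
\end{align*}
and use the boundedness $\|g_t(\theta_t)\|_2 \le G$ (from $r\le r_{\max}$, $\|\theta_t\|_2\le R$, and $\|\phi\|_2\le 1$) to control the last term by $\alpha_0^2 G^2$.

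Next I would take conditional expectation and split the cross term as
\begin{align*}
\mE\langle g_t(\theta_t),\theta_t-\theta^*\rangle \;=\; \underbrace{\langle A_{\theta_t}\theta_t+b_{\theta_t},\,\theta_t-\theta^*\rangle}_{\text{mean-field term}} + \underbrace{\mE\langle g_t(\theta_t) - (A_{\theta_t}\theta_t+b_{\theta_t}),\,\theta_t-\theta^*\rangle}_{\text{stochastic bias}}.
\end{align*}
For the mean-field term I would invoke the fixed-point relation $A_{\theta^*}\theta^*+b_{\theta^*}=0$, together with the Lipschitz continuity of $\pi_\theta$ in $\theta$ (which controls how fast $A_\theta,b_\theta$ move with $\theta$), and the negative-definiteness assumed in Assumption~\ref{assumption22}. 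This gives an upper bound of the form $-w_s\|\theta_t-\theta^*\|_2^2$ plus cross-terms already absorbed into the shifted matrix $A_{\theta^*}+C\lambda I$.

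The main obstacle is the stochastic bias term, and this is where the novel auxiliary Markov chain construction of the paper has to be invoked in exactly the same way as for Theorem~\ref{thm:sarsa}. The strategy is: pick a lookback horizon $\tau_0 = \min\{t\ge 0: m\rho^t\le\alpha_0\}$, freeze the policy at time $t-\tau_0$, and compare the trajectory of the true time-varying chain with that of the auxiliary time-homogeneous chain driven by $\pi_{\theta_{t-\tau_0}}$. Two differences have to be bounded: (i) the drift of $\theta$ over $\tau_0$ steps, which is $O(\alpha_0 \tau_0 G)$ by the bounded update, contributing a $(12+2\lambda C)\tau_0$-type term; and (ii) the total-variation gap between the true and auxiliary chains, which via \eqref{eq:Lpi} and a union bound across $\tau_0$ steps produces the $4GC|\mca|\tau_0^2$ term; the uniform ergodicity in Assumption~\ref{assumtion1} then makes the auxiliary chain look mixed, contributing the $O(1/\rho)$ constant. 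Altogether this yields a uniform bias bound of order $\alpha_0 G^2\bigl((12+2\lambda C)\tau_0 + 4GC|\mca|\tau_0^2 + 8/\rho + 1\bigr)$.

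Combining everything, I would arrive at the one-step recursion
\begin{align*}
y_{t+1} \;\le\; (1-2\alpha_0 w_s)\,y_t + \alpha_0^2 G^2\bigl((12+2\lambda C)\tau_0 + 4GC|\mca|\tau_0^2 + 8/\rho + 1\bigr).
\end{align*}
Since the step size is constant, unrolling this recursion is a geometric sum: the contraction factor iterated $T$ times gives $(1-2\alpha_0 w_s)^T \le e^{-2\alpha_0 w_s T}$ on the initial error $y_0$, and the bias term sums geometrically to at most $\alpha_0 G^2\bigl((12+2\lambda C)\tau_0 + 4GC|\mca|\tau_0^2 + 8/\rho + 1\bigr)/(2w_s)$, matching the claimed bound. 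The condition $\alpha_0 < 1/(2w_s)$ is exactly what guarantees the contractive factor is in $(0,1)$. The technically delicate step is item (ii) above, since controlling the $\tau_0$-step drift of a chain whose transition kernel itself is evolving requires carefully telescoping policy differences via \eqref{eq:Lpi}; the rest is bookkeeping.
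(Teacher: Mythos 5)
Your proposal follows essentially the same route as the paper: the one-step decomposition via non-expansiveness of the projection, the mean-field term bounded by $-w_s\ltwo{\theta_t-\theta^*}^2$ using Lemma \ref{lemma:a10} (your $A_{\theta_t}\theta_t+b_{\theta_t}$ is exactly the paper's $\bar g(\theta_t)$), a uniform bound on the stochastic bias $\mathbf\Lambda_t(\theta_t)$ obtained from the auxiliary-Markov-chain argument of Lemma \ref{lemma:a13} with lookback $\tau_0$ (taking $\tau=t$ for $t\le\tau_0$), and a geometric unrolling of the contraction $(1-2\alpha_0 w_s)$. The attribution of the individual constants to drift versus total-variation gap is slightly loose but consistent with the paper's accounting, so this is the same proof.
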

	If $\alpha_0$ is small enough, and $T$ is large enough, then the algorithm converges to a small neighborhood of $\theta^*$. For example, if $\alpha_t=1/\sqrt T$, the upper bound converges to zero as $T\rightarrow\infty$.
	The proof of this theorem is a straightforward extension of that for Theorem \ref{thm:sarsa}.
	
	In order for Theorems \ref{thm:sarsa} and \ref{thm:sarsa_constantstepsize} to hold, the projection radius $R$ shall be chosen such that $\|\theta^*\|_2\leq R$. However, $\theta^*$ is unknown in advance. We next provide an upper bound on $\|\theta^*\|_2$, which can be estimated in practice \cite{bhandari2018finite}.
	\begin{lemma}\label{lemma:Radious_sarsa}
		For the projected SARSA algorithm in \eqref{eq:updatesarsa}, the limit point $\theta^*$ satisfies that	$\|\theta^*\|_2\leq\frac{r_{\max}}{|w_l|},$ where $w_l<0$ is the largest eigenvalue of $\frac{1}{2}(A_{\theta^*}+A_{\theta^*}^T)$.
	\end{lemma}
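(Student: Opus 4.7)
The plan is to extract a scalar inequality from the fixed-point relation $A_{\theta^*}\theta^* + b_{\theta^*} = 0$ by contracting it against $\theta^*$. First I would take the inner product with $\theta^*$, yielding
\[
\theta^{*T} A_{\theta^*}\theta^* \;=\; -\,\theta^{*T} b_{\theta^*}.
\]
The left-hand side depends only on the symmetric part $S := \tfrac{1}{2}(A_{\theta^*}+A_{\theta^*}^T)$ because $\theta^{*T}A_{\theta^*}\theta^* = \theta^{*T}S\theta^*$. Since $w_l < 0$ is by assumption the \emph{largest} eigenvalue of $S$, the Rayleigh quotient bound gives $\theta^{*T}S\theta^* \le w_l \|\theta^*\|_2^2 = -|w_l|\,\|\theta^*\|_2^2$.

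Next I would control the right-hand side. Using the definition $b_{\theta^*} = \mathbb{E}_{\theta^*}[\phi(X,A)r(X,A)]$, together with $\|\phi(x,a)\|_2 \le 1$ and $0 \le r(x,a) \le r_{\max}$, Jensen's inequality yields $\|b_{\theta^*}\|_2 \le \mathbb{E}_{\theta^*}[\|\phi(X,A)\|_2\,|r(X,A)|] \le r_{\max}$. Cauchy–Schwarz then gives $|\theta^{*T} b_{\theta^*}| \le r_{\max}\|\theta^*\|_2$.

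Combining the two estimates, the fixed-point identity rewrites as
\[
|w_l|\,\|\theta^*\|_2^2 \;\le\; -\theta^{*T} A_{\theta^*}\theta^* \;=\; \theta^{*T} b_{\theta^*} \;\le\; r_{\max}\,\|\theta^*\|_2,
\]
so dividing by $\|\theta^*\|_2$ (the case $\theta^*=0$ is trivial) produces the claimed bound $\|\theta^*\|_2 \le r_{\max}/|w_l|$. There is no serious obstacle here: the only point that deserves care is the sign bookkeeping for the largest eigenvalue of a negative-definite symmetric matrix (Assumption~\ref{assumption22} together with the linear-independence condition on the features ensures $S \prec 0$, so $w_l < 0$ is well-defined). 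Everything else is a one-line application of Cauchy–Schwarz and the uniform reward/feature bounds.
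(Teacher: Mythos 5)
Your proof is correct and follows essentially the same route as the paper's: contract the fixed-point relation $A_{\theta^*}\theta^*+b_{\theta^*}=0$ against $\theta^*$, lower-bound the quadratic form via the largest eigenvalue $w_l<0$ of the symmetric part, and upper-bound $\theta^{*T}b_{\theta^*}$ by $r_{\max}\|\theta^*\|_2$ using $\|\phi\|_2\le 1$ and the reward bound. You merely spell out the Jensen/Cauchy--Schwarz and Rayleigh-quotient steps that the paper leaves implicit.
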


	\subsection{Outline of Technical Proof of Theorem \ref{thm:sarsa}}
	The major challenge in the finite-sample analysis of  SARSA  lies in analyzing the stochastic bias in gradient, which are two-folds:  (1) non-i.i.d.\ samples; and (2) dynamically changing behavior policy. 
	
	First, as per the updating rule in \eqref{eq:updatesarsa}, there is a strong coupling between the sample path and  $\{\theta_t\}_{t\geq 0}$, because the samples are used to compute the gradient $g_t$ and then $\theta_{t+1}$, which introduces a strong dependency between $\{\theta_t\}_{t\geq 0},$ and $\{X_t,A_t\}_{t\geq 0}$, and thus the bias in $g_t$. 
	Moreover, differently from TD learning and Q-learning, $\theta_t$ is further used (as in the policy $\pi_{\theta_t}$) to generate the subsequent actions, which makes the  dependency even stronger.
	Although  the convergence can still be established using the O.D.E. approach \cite{melo2008analysis}, in order to derive a finite-sample analysis, the stochastic bias in the gradient needs to be explicitly characterized, which makes the problem challenging. 
	
	Second, as $\theta_t$ updates, the transition kernel for the state-action pair $(X_t,A_t)$ changes with time. Previous analyses, e.g.,  \cite{bhandari2018finite}, rely on the facts that the behavior policy is fixed and that the underlying Markov process is uniformly ergodic, so that the Markov process reaches its stationary distribution quickly. 
	In \cite{perkins2003convergent}, a variant of SARSA was studied, where between two policy improvements, the behavior policy is fixed,  
	and a TD method is used to estimate its action-value function until convergence. 
	The behavior policy is then improved using a Lipschitz continuous policy improvement operator.
	In this way, for each given behavior policy,  the induced Markov process can reach its stationary distribution quickly so that the analysis can be conducted. 
	The SARSA algorithm studied in this paper does not possess these nice properties. The behavior policy of the SARSA algorithm changes at each time step, and the underlying Markov process does not necessarily reach a stationary distribution due to lack of uniform ergodicity. 
	
	To provide a finite-sample analysis, our major technical novelty lies in the design of auxiliary  Markov chains, which  are uniformly ergodic and ,  to approximate the original Markov chain induced by the SARSA algorithm, and a careful decomposition of the stochastic bias.
	Using such an approach, the gradient bias can be explicitly characterized. Then together with a gradient descent type of analysis, we derive the finite-sample analysis for the SARSA algorithm. 
	
	To illustrate the main idea of the proof, we provide a sketch. We note that Step 3 contains our major technical contributions of bias characterization for time-varying Markov processes. 
	
	\begin{proof}[Proof sketch]
		We first introduce some notations. For any fixed $\theta\in\mR^N$, define $\bar g(\theta) =\mE_{\theta}[g_t(\theta)]$, where $X_t$ follows the stationary distribution $\mathsf{P}_\theta$, and $(A_t,X_{t+1},A_{t+1})$ are subsequent actions and states generated according to the policy $\pi_\theta$ and the transition kernel $\mathsf P$. Here, $\bar g(\theta)$ can be interpreted as the noiseless gradient at $\theta$. We then define 
		\begin{flalign}
			\mathbf\Lambda_t(\theta)=\langle\theta-\theta^*,g_t(\theta)-\bar g(\theta)\rangle.
		\end{flalign}
		 Thus,  $\mathbf\Lambda_t(\theta_t)$ measures the bias caused by using non-i.i.d.\ samples to estimate the gradient.

		\textbf{{Step 1.}} Error decomposition. The error at each time step can be decomposed recursively as follows:
		\begin{flalign}\label{eq:errordecom}
		\mE[\ltwo{\theta_{t+1}-\theta^*}^2]
		\leq& \mE[\ltwo{\theta_t-\theta^*}^2]+ 2\alpha_t \mE[\langle\theta_t-\theta^*,\bar g(\theta_t)-\bar g(\theta^*)\rangle]\nn\\
		&+ \alpha_t^2\mE[\ltwo{ g_t(\theta_t)}^2]+2\alpha_t\mE[\mathbf\Lambda_t(\theta_t)].
		\end{flalign}
		
		\textbf{Step 2.} Gradient descent type analysis. The first three terms in \eqref{eq:errordecom} mimic the analysis of the gradient descent algorithm without noise, because the accurate gradient $\bar g_t$ at $\theta_t$ is used. 
		
		Due to the projection step in \eqref{eq:updatesarsa},  $\ltwo{ g_t(\theta_t)}$ is upper bounded by $G$. It can also be shown that 
		\begin{flalign}
			\mE[\langle\theta_t-\theta^*,\bar g(\theta_t)-\bar g(\theta^*)\rangle]\leq (\theta_t-\theta^*)^T(A_{\theta^*}+C\lambda I)(\theta_t-\theta^*).
		\end{flalign}
		 For a not so large $C$, i.e., $\pi_{\theta}$ is smooth enough with respect to $\theta$, $(A_{\theta^*}+C\lambda I)$ is negative definite. Then, we have
		\begin{flalign}
		\mE[\langle\theta_t-\theta^*,\bar g(\theta_t)-\bar g(\theta^*)\rangle]\leq -w_s\mE[\ltwo{\theta_t-\theta^*}^2].
		\end{flalign}

		\textbf{Step 3.} Stochastic bias analysis.  This step consists of our major technical developments. The last term in  \eqref{eq:errordecom}  is the bias caused by using a single sample path with non-i.i.d. data and time-varying behavior policy. For convenience, we rewrite $\mathbf\Lambda_t(\theta_t)$ as $\mathbf\Lambda_t(\theta_t, O_t)$, where $O_t=(X_t,A_t,X_{t+1},A_{t+1})$. Bounding this term is  challenging due to the strong dependency between $\theta_t$ and $O_t$. 
		
		We first show that $\mathbf\Lambda_t(\theta, O_t)$ is Lipschitz in $\theta$. Due to the projection step, $\theta_t$ changes slowly with $t$. Combining the two facts, we can show that for any $\tau>0$,
		\begin{flalign}\label{eq:6}
		\mathbf \Lambda_t(\theta_t,O_t)\leq \mathbf \Lambda_t(\theta_{t-\tau},O_t)+ (6+\lambda C)G^2\sum_{i=t-\tau}^{t-1}\alpha_i.
		\end{flalign}
		Such a step is intended to decouple the dependency between $O_t$ and $\theta_t$ by considering $O_t$ and $\theta_{t-\tau}$. If  the Markov chain $\{(X_t,A_t,\theta_t)\}_{t\geq 0}$ induced by SARSA  was uniformly ergodic, and satisfied Assumption \ref{assumtion1}, then for any $\theta_{t-\tau}$,  $O_t$ would reach its stationary distribution quickly for large $\tau$. However, such an argument is not necessarily true, since $\theta_t$ changes with time and thus the transition kernel of the Markov chain changes with time.
		
		Our idea is to construct an auxiliary Markov chain to assist our proof.  Consider the following new Markov chain. Before time $t-\tau+1$, the states and actions
		are generated according to the SARSA algorithm, but after time $t-\tau+1$, the behavior policy is kept fixed as $\pi_{\theta_{t-\tau}}$ to generate all the subsequent actions. We then denote by $\tilde O_t=(\tilde X_t,\tilde A_t,\tilde X_{t+1},\tilde A_{t+1})$ the observations of the new Markov chain at time $t$ and time $t+1$. For this new Markov chain, for large $\tau$, $\tilde O_t$  reaches the stationary distribution induced by $\pi_{\theta_{t-\tau}}$ and $\mathsf{P}$. It then can be shown that
		\begin{flalign}\label{eq:7}
		\mE[{\mathbf\Lambda}_t(\theta_{t-\tau},\tilde O_t)]\leq  4G^2m\rho^{\tau-1}.
		\end{flalign}
		
		The next step is to bound the difference between the Markov chain generated by the SARSA algorithm and the auxiliary Markov chain that we construct. Since the behavior policy changes slowly, due to its Lipschitz property and the small step size $\alpha_t$,  the two Markov chains should not deviate from each other too much. 
		It can be shown that for the case with diminishing step size (similar argument can be obtained for the case with constant step size), 
		\begin{flalign}\label{eq:8}
		\mE[\mathbf \Lambda_t(\theta_{t-\tau},O_t)]-\mE[ \mathbf \Lambda_t(\theta_{t-\tau},\tilde O_t) ]\leq  \frac{C|\mca|G^3\tau}{w}\log\frac{t}{t-\tau}.
		\end{flalign}
		Combining \eqref{eq:6}, \eqref{eq:7} and \eqref{eq:8} yields an upper bound on  $\mE[\mathbf\Lambda_t(\theta_t)]$.
		
		\textbf{{Step 4.} }Putting the first three steps together and recursively applying Step 1 complete the proof. 
	\end{proof}

	\vspace{-0.2cm}
	\section{Finite-sample Analysis for Fitted SARSA Algorithm}
	\vspace{-0.2cm}
	In this section, we introduce a more general on-policy fitted SARSA algorithm (see Algorithm \ref{al:2}), which provides a general framework for on-policy fitted policy iteration. Specifically, after each policy improvement, we perform a ``fitted'' step that consists of $B$ TD(0) iterations to estimate the action-value function of the current policy. This more general fitted SARSA algorithm contains the original SARSA algorithm \cite{Rummery1994} as a special case with $B=1$ and the algorithm in \cite{perkins2003convergent} as another special case with $B=\infty$ (i.e., until TD(0) converges). Moreover, the entire algorithm uses only \textit{one single} Markov trajectory, instead of restarting from state $x_0$ after each policy improvement \cite{perkins2003convergent}. 
	Differently from most existing fitted policy iteration algorithms, where a regression problem for model fitting is solved between two policy improvements, our fitted SARSA algorithm does not require a convergent TD iteration process between policy improvements. As will be shown, the on-policy fitted SARSA algorithm is guaranteed to converge for an arbitrary $B$. The overall sample complexity for this fitted algorithm will be provided. 
	\begin{algorithm}[!htb]
		\caption{General Fitted SARSA}\label{al:2}
		\begin{algorithmic}
			\State \textbf{Initialization:}
			\State $\theta_0$, $x_0$, $R$, $\phi_i$, for $i=1,2,...,N$
			\State \textbf{Method:}
			\State $\pi_{\theta_0}\leftarrow \mg(\phi^T\theta_0)$
			\State {Choose $a_0$ according to $\pi_{\theta_0}$}
			\For {$t=0,1,2,...$}
			\State \textbf{TD learning of policy $\pi_{\theta_{tB}}$}:
			\For {$j=1,...,B$}
			\State Observe $x_{tB+j}$ and $r(x_{tB+j-1},a_{tB+j-1})$
			\State Choose $a_{tB+j}$ according to $\pi_{\theta_{tB}}$
			\State $ \theta_{tB+j} \leftarrow \proj ( \theta_{tB+j-1}+\alpha_{tB+j-1}g_{tB+j-1}( \theta_{tB+j-1}))$
			\EndFor
			\State  \textbf{Policy improvement:} $\pi_{\theta_{(t+1)B}}\leftarrow \mg(\phi^T\theta_{(t+1)B})$
			\EndFor
		\end{algorithmic}
	\end{algorithm}
	
	In fact, there is no need for the number $B$ of TD iterations in the fitted step to be the same. More generally, 
	by setting the number of TD iterations differently, we can control the estimation accuracy of the action-value function between policy improvements using the finite-sample bound of TD \cite{bhandari2018finite}. Our analysis can be extended to this general scenario in a straightforward manner, but the mathematical expressions get more involved. Thus we focus on the simple case with the same $B$  to convey the central idea. 
	
	The following theorem  provides the finite-sample bound on the convergence of the fitted SARSA algorithm. 
	\begin{theorem}\label{thm:generalsarsa}
		Consider the fitted SARSA algorithm with linear function approximation as in Algorithm \ref{al:2}. Suppose that Assumptions \ref{assumtion1} and \ref{assumption22}  hold.
		
		(1) With a decaying step size  $\alpha_t=\frac{1}{2tw}$ for $t\geq 1$ and $w\leq w_s$, we have that
		\begin{flalign}
		&\mE[\ltwo{\theta_{TB}-\theta^*}^2]\nn\\
		&\le \Big(4G^ 2(\tau_0+B)w+(\log T+1)((6+\lambda C)G^2\tau_0+(6.5+\lambda C)G^2B\nn\\
			&\quad+C|\mca|G^3\tau_0^2)+4G^2/\rho+0.5BG^2\Big)\big/\big(w^2BT\big),
		\end{flalign}
		where $\tau_0=\inf\{nB:m\rho^{nB}\leq\alpha_{TB}\}$. For sufficiently large $T$, $\tau_0\sim\log T$, and hence $\mE \|\theta_T-\theta^*\|_2^2\leq \mathcal{O}\left( \frac{\log^3T}{T}\right).$ For any given $B$, to guarantee the accuracy $\mE[\ltwo{\theta_{TB}-\theta^*}^2]\leq \delta$ for a small $\delta$, the overall sample complexity is given by $\mathcal O(\frac{1}{\delta}\log^3\frac{1}{\delta})$.
		
		(2) With a constant step size $\alpha_t=\alpha_0<\frac{1}{2w_sB}$ for $t\geq 0$, we have that
		\begin{flalign}
		\mE&[\ltwo{\theta_{TB}-\theta^*}^2]\nn\\
		&\leq e^{-2w_sB\alpha_0T}\ltwo{\theta_{0}-\theta^*}^2 + \frac{\alpha_0(BG^2+2(6+\lambda C)G^2(\tau_0+B)+8G^2/\rho+2|\mca|G^3\tau_0^2)}{2w_s},
		\end{flalign}
		where $\tau_0=\inf\{nB:m\rho^{nB}\leq\alpha_{0}\}$.
	\end{theorem}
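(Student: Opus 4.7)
The plan is to adapt the four-step template used for the proof of Theorem \ref{thm:sarsa} to the block structure of Algorithm \ref{al:2}, in which the behavior policy is refreshed only at times that are integer multiples of $B$. Since the update rule of $\theta_t$ is still given by \eqref{eq:updatesarsa} at every single step, the one-step recursive decomposition \eqref{eq:errordecom} applies unchanged at every $t$. Step 2 of the SARSA proof also carries over verbatim, since the contraction $\mE[\langle\theta_t-\theta^*,\bar g(\theta_t)-\bar g(\theta^*)\rangle]\le -w_s\mE[\ltwo{\theta_t-\theta^*}^2]$ uses only Assumption \ref{assumption22} and the Lipschitz continuity of $\pi_\theta$, not the frequency of policy updates. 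Iterating \eqref{eq:errordecom} from $t=0$ to $t=TB-1$ therefore leaves the stochastic bias $\mE[\mathbf\Lambda_t(\theta_t)]$ as the only quantity that is genuinely affected by the block parameter $B$.

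The main work is to re-do Step 3 with the block structure in mind. For any $t$ lying in block $s$, i.e., $t\in[sB,(s+1)B)$, the policy that actually generates $O_t$ is the frozen policy $\pi_{\theta_{sB}}$, so when choosing the lag $\tau$ it is natural to restrict to multiples of $B$, setting $\tau=nB$. With this choice, the auxiliary Markov chain of the SARSA proof can be taken to run with the single policy $\pi_{\theta_{(s-n)B}}$ throughout the whole window $[t-\tau+1,t+1]$, which is uniformly ergodic by Assumption \ref{assumtion1}. The counterpart of \eqref{eq:7} then gives $\mE[\mathbf\Lambda_t(\theta_{t-\tau},\tilde O_t)]\le 4G^2 m\rho^{\tau-1}$ with the multiple-of-$B$ lag $\tau_0=\inf\{nB:m\rho^{nB}\le\alpha_{TB}\}$. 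The Lipschitz-in-$\theta$ drift bound \eqref{eq:6} splits naturally into an inter-block portion of length $\tau_0$ (contributing the $(6+\lambda C)G^2\tau_0$ term in the statement) and an intra-block portion of length at most $B$ that is policy-invariant (contributing the $(6.5+\lambda C)G^2 B$ term). Finally, the coupling bound \eqref{eq:8} between the true and auxiliary chains only needs to control policy jumps at block boundaries, each of size $C\ltwo{\theta_{kB}-\theta_{(k-1)B}}\le C B G\alpha_{(k-1)B}$; summing $n$ such jumps reproduces the $C|\mca|G^3\tau_0^2$ term without an extra $B$ multiplier.

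Putting everything together yields the per-iteration recursion
\[
\mE[\ltwo{\theta_{t+1}-\theta^*}^2]\le(1-2\alpha_t w_s)\mE[\ltwo{\theta_t-\theta^*}^2]+\alpha_t\Phi_t,
\]
where $\Phi_t$ collects the three bias contributions above together with the $\alpha_t G^2$ from $\ltwo{g_t(\theta_t)}^2$. In the diminishing step-size regime $\alpha_t=1/(2tw)$, telescoping over $t=1,\dots,TB$ and dividing by $BT$---the natural normalization, since the error is measured at iteration $TB$---delivers the $\mathcal O(\log^3 T/T)$ rate with the exact constants in part (1). In the constant step-size regime $\alpha_t=\alpha_0<1/(2w_s B)$, the $TB$-fold geometric contraction $(1-2\alpha_0 w_s)^{TB}\le e^{-2 w_s B\alpha_0 T}$ on the initial error, combined with the geometric-series sum of the bias, produces the bound in part (2). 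The main obstacle I anticipate is keeping the two notions of drift straight in Step 3---$\theta$-drift within a block (present but policy-invariant) versus $\theta$-drift across block boundaries (causing Lipschitz policy changes)---and verifying that lifting the lag $\tau$ to the nearest multiple of $B$ does not corrupt the $\tau_0$ constants inherited from Theorem \ref{thm:sarsa}.
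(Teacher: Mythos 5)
Your plan is sound and reaches the stated bounds, but it takes a genuinely different route from the paper at the level of the error decomposition. You keep the per-step recursion \eqref{eq:errordecom} anchored at $\theta_t$ and push all of the $B$-dependence into the bias term $\mE[\mathbf\Lambda_t(\theta_t)]$; the paper instead works block-by-block, expanding $\mE[\ltwo{\theta_{(t+1)B}-\theta^*}^2]$ directly in terms of $\mE[\ltwo{\theta_{tB}-\theta^*}^2]$ with cross terms $\langle\theta_{tB}-\theta^*,g_i(\theta_i)\rangle$ anchored at the block-start iterate, and isolates the bias as $\mE[\langle\theta_{tB}-\theta^*,g_i(\theta_i)-\bar g(\theta_{tB})\rangle]$ (Lemma \ref{lemma:8}). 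Each choice has a price: the paper's block expansion incurs the quadratic term $\ltwo{\sum_i\alpha_i g_i(\theta_i)}^2\le B^2G^2\alpha_{tB}^2$, which is where the explicit $B$ multipliers such as $0.5BG^2$ enter, while your per-step route avoids that $B^2$ but must confront the fact that within block $s$ the data are generated by $\pi_{\theta_{sB}}$ whereas $\bar g(\theta_t)$ is defined through the stationary measure of $\pi_{\theta_t}$; that mismatch is absorbed by the same Lipschitz machinery (Lemmas \ref{lemma:lipmeasure} and \ref{lemma:LipSarsa}), so your constants will not be \emph{exactly} those in the statement, though they are of the same order and the theorem is only an upper bound. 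One step of your sketch needs repair when written out: the claimed analogue of \eqref{eq:7}, $\mE[\mathbf\Lambda_t(\theta_{t-\tau},\tilde O_t)]\le 4G^2m\rho^{\tau-1}$, is not correct as stated for $t=sB+j$ with $0<j<B$ and $\tau=nB$, because then $\theta_{t-\tau}=\theta_{(s-n)B+j}$ is not a block-boundary parameter: the auxiliary chain mixes to the stationary measure of $\pi_{\theta_{(s-n)B}}$, and the zero-mean identity $\mE[\mathbf\Lambda_t(\theta,O')]=0$ only holds when the $\theta$ inside $\mathbf\Lambda_t$ matches the policy generating $O'$. The fix is exactly what Lemma \ref{lemma:8} does: rewind all the way to the block boundary $\theta_{(s-n)B}$ and let the Lipschitz drift bound cover the extra $j\le B$ steps, which is precisely the origin of the $(\tau+B)$ factor in the paper's $(6+\lambda C)G^2(\tau+B)\alpha_{tB-\tau}$ term; your ``intra-block portion of length at most $B$'' is the right instinct, but it must be attached to the anchoring of $\mathbf\Lambda$, not treated as policy-invariant drift.
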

	
	
	The item (2) of Theorem \ref{thm:generalsarsa} indicates that with a small enough constant step size and a large enough $T$, the fitted SARSA algorithm converges to a small neighborhood of $\theta^*$.
	
	Theorem \ref{thm:generalsarsa} further implies that the fitted step can take any number of TD iterations (not necessarily to converge) without affecting the overall convergence and sample complexity of the fitted SARSA algorithm. In particular, the comparison between the original SARSA and the fitted SARSA algorithms indicates that they have the same overall sample complexity. On the other hand, the fitted SARSA algorithm is more computationally efficient due to the following two facts: (a) with the same number of samples $n_0$, the general fitted SARSA algorithm uses a fewer number $n_0/B$ of policy improvement operators; and (b) to apply the policy improvement operator, an inner product between $\phi$ and $\theta_{tB}$ needs to be computed, the complexity of which scales linearly with the size of the action space $|\mca|$.
	
	
	\section{Discussion of Lipschitz Continuity Assumption}\label{sec:discussion}
	In this section, we discuss the Lipschitz continuity assumption on the policy improvement operator $\mg$, which plays an important role in the convergence of SARSA. 
	
	Using a tabular approach that stores the action-values, the convergence of the SARSA algorithm was established in \cite{singh2000convergence}. However, an example given in \cite{gordon1996chattering} shows that  SARSA  with function approximation and $\epsilon$-greedy policy improvement operator is chattering, and  does not converge. Later,  \cite{gordon2000reinforcement} showed that SARSA  converges to a bounded region, although this region may be large, and does not diverge as Q-learning with linear function approximation. One possible explanation of this non-convergent behavior of the SARSA algorithm with $\epsilon$-greedy and softmax policy improvement operators is the discontinuity in the action selection strategies \cite{perkins2002existence,de2000existence}. More specifically, a slight change in the estimate of the action-value function may result in a big change in the behavior policy, which thus yields a completely different estimate of the action-value function.
	
	Toward further understanding the convergence of SARSA, \cite{de2000existence} showed that the approximate value iteration with soft-max policy improvement is guaranteed to have fixed points, which however may not be unique, and \cite{perkins2002existence} later showed that for any continuous policy improvement operator, fixed points of SARSA are guaranteed to exist. Then \cite{perkins2003convergent} developed a convergent form of SARSA by using a Lipschitz continuous policy improvement operator, and demonstrated its convergence to the unique limit point when the Lipschitz constant is not too large. As discussed in \cite{perkins2002existence}, the non-convergence example in \cite{gordon1996chattering} does not contradict the convergence result in \cite{perkins2003convergent}, because the example does not satisfy the Lipschitz continuity condition of the policy improvement operator, which is essential to guarantee the convergence of SARSA. 
	In this paper, we follow this line of reasoning, and consider Lipschitz continuous policy improvement operators. 
	
	As discussed in \cite{perkins2003convergent}, the Lipschitz constant $C$ shall be chosen not so large to ensure the convergence of the SARSA algorithm. However, to ensure exploitation, one generally prefers a large Lipschitz constant $C$ so that the agent can choose actions with higher estimated action-values. In \cite{perkins2003convergent}, an adaptive approach to choose a policy improvement operator with a proper $C$ was proposed. It was also noted in \cite{perkins2003convergent} that it is possible that the convergence could be obtained with a much larger $C$ than the one suggested by Theorems \ref{thm:sarsa}, \ref{thm:sarsa_constantstepsize} and \ref{thm:generalsarsa}.
	
	However, an important open problem for the SARSA algorithms with Lipschitz continuous operator (also for other algorithms with continuous action selection \cite{de2000existence}) is that there is no theoretical performance characterization of the solutions this type of algorithms produce. It is thus of future interest to further investigate the performance of the policy generated by the SARSA algorithm 	with Lipschitz continuous operator. 
	

	\section{Conclusion}
	In this paper, we presented the first finite-sample analysis for the SARSA  algorithm with continuous state space and linear function approximation. 
	Our analysis is applicable to the on-line case with a single sample path and non-i.i.d.\ data. In particular, we developed a novel technique to handle the stochastic bias for dynamically changing behavior policies, which enables non-asymptotic analysis of this type of stochastic approximation algorithms. 
	We also presented a fitted SARSA algorithm, which provides a general framework for \textit{iterative} on-policy fitted policy iterations. We also presented the finite-sample analysis for such a fitted SARSA algorithm. 
	\section*{Acknowledgement}
	We would like to thank the anonymous reviewer and the Area Chair for their valuable comments. 
	The work of T. Xu and Y. Liang was supported in part by the U.S. National Science Foundation under Grants ECCS-1818904, CCF-1909291, and CCF-1900145.
	\newpage
	%
	
	\bibliography{RL,liang_temp}
	\bibliographystyle{abbrv}
	\newpage

	\appendix
	{\Large\textbf{Supplementary Materials}}

	\section{Useful Lemmas for Proof of Theorem \ref{thm:sarsa}}
	For the SARSA algorithm, define for any $\theta\in\mR^N$,
	\begin{flalign}
	\bar g(\theta)=\mE_{\theta}\left[\phi(X,A)\left(r(X,A)+\gamma \phi^T(Y,B)\,\theta -\phi^T(X,A)\theta\right)\right].
	\end{flalign}
	It can be easily verified that $\bar g(\theta^*)=0$. We then define $\mathbf\Lambda_t(\theta)=\langle\theta-\theta^*,g_t(\theta)-\bar g(\theta)\rangle$.

	\begin{lemma}\label{lemma:a8}
		For any $\theta\in \mR^N$ such that $\ltwo{\theta}\leq R$, $\|g_t(\theta)\|_2 \leq  G.$
	\end{lemma}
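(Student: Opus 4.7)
The plan is to bound the two factors of $g_t(\theta) = \phi(x_t,a_t)\,\bdelta_t$ separately and combine them. For the feature factor, I would simply invoke the normalization hypothesis $\|\phi(x,a)\|_2 \le 1$ that was imposed on the linear function class. For the temporal-difference scalar $\bdelta_t = r(x_t,a_t) + \gamma\,\phi^T(x_{t+1},a_{t+1})\theta - \phi^T(x_t,a_t)\theta$, I would apply the triangle inequality and Cauchy--Schwarz to each of the three pieces.

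Concretely, the uniform reward bound gives $|r(x_t,a_t)| \le r_{\max}$, and Cauchy--Schwarz combined with $\|\phi(x,a)\|_2 \le 1$ and the hypothesis $\|\theta\|_2 \le R$ gives
\begin{equation*}
|\phi^T(x_{t+1},a_{t+1})\theta| \le R, \qquad |\phi^T(x_t,a_t)\theta| \le R.
\end{equation*}
Using $\gamma \in [0,1]$, this yields $|\bdelta_t| \le r_{\max} + \gamma R + R \le r_{\max} + 2R = G$. Therefore
\begin{equation*}
\|g_t(\theta)\|_2 = \|\phi(x_t,a_t)\|_2 \cdot |\bdelta_t| \le 1 \cdot G = G,
\end{equation*}
which is the claimed bound.

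There is no substantive obstacle here: the proof is just a one-line application of Cauchy--Schwarz together with the reward bound and the normalization of the feature map, and this is precisely the reason the constant $G$ was defined as $r_{\max}+2R$ in the first place. The lemma is purely a bookkeeping statement that justifies, via the projection step of the algorithm, the boundedness of the update direction $g_t(\theta)$, which is what downstream arguments (in particular the $\alpha_t^2 \mE[\|g_t(\theta_t)\|_2^2]$ term in the error decomposition and the Lipschitz control of $\mathbf\Lambda_t$) will repeatedly invoke.
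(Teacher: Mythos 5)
Your proof is correct and follows essentially the same route as the paper: bound $\|\phi(x_t,a_t)\|_2$ by $1$, bound $|\bdelta_t|$ by $r_{\max}+(1+\gamma)\|\theta\|_2\leq r_{\max}+2R=G$ via the triangle inequality and Cauchy--Schwarz, and multiply. Nothing is missing.
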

	\begin{proof}
		By the definition of $g_t(\theta)$,  we obtain
		\begin{flalign}
		\ltwo{g_t(\theta)}&=\ltwo{\phi(x_t,a_t)(r(x_t,a_t)+\gamma\phi^T(x_{t+1},a_{t+1})\theta-\phi^T(x_t,a_t)\theta) }\nn\\
		&\leq |(r(x_t,a_t)+\gamma\phi^T(x_{t+1},a_{t+1})\theta-\phi^T(x_t,a_t)\theta|\nn\\
		&\leq r_{\max}+(1+\gamma)\ltwo{\theta}\nn\\
		&\leq G,
		\end{flalign}
		where the first two inequalities are due to the assumption that $\ltwo{\phi(x,a)}\leq 1$.
	\end{proof}
	
	The following lemma is useful to deal with the time-varying behavior policy. 
	\begin{lemma}\label{lemma:lipmeasure}
		For any $\theta_1$ and $\theta_2$ in $\mR^N$, 
		\begin{flalign}
		d_{TV}(\mathsf{P}_{\theta_1},\mathsf{P}_{\theta_2})\leq |\mca|C\left(\lceil\log_\rho m^{-1}\rceil+\frac{1}{1-\rho}\right)\ltwo{\theta_1-\theta_2},
		\end{flalign}
		and
		\begin{flalign}
		d_{TV}(\mu_{\theta_1},\mu_{\theta_2})\leq |\mca|C\left(1+\lceil\log_\rho m^{-1}\rceil+\frac{1}{1-\rho}\right)\ltwo{\theta_1-\theta_2}.
		\end{flalign}
	\end{lemma}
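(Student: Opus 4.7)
The plan is to reduce the lemma to a one-step Lipschitz bound on the state transition kernel induced by $\pi_\theta$, then propagate this bound to the stationary measure via a telescoping identity combined with the uniform ergodicity of Assumption \ref{assumtion1}. Define the state-only transition kernel
\begin{flalign*}
K_\theta(\cdot\mid x) := \sum_{a\in\mca}\pi_\theta(a\mid x)\,\mathsf{P}(\cdot\mid x,a),
\end{flalign*}
so that $\mathsf{P}_\theta$ is its unique invariant measure. As the first step, I would establish the one-step perturbation bound $\sup_x d_{TV}(K_{\theta_1}(\cdot\mid x),K_{\theta_2}(\cdot\mid x))\leq \tfrac{|\mca|C}{2}\ltwo{\theta_1-\theta_2}$ by expanding the difference as $\sum_a(\pi_{\theta_1}(a\mid x)-\pi_{\theta_2}(a\mid x))\mathsf{P}(\cdot\mid x,a)$ and applying the Lipschitz assumption \eqref{eq:Lpi} together with the fact that each $\mathsf{P}(\cdot\mid x,a)$ is a probability measure.

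Next, I would exploit the stationarity identity $\mathsf{P}_{\theta_i}K_{\theta_i}^j = \mathsf{P}_{\theta_i}$ to write, for every $n\geq 1$,
\begin{flalign*}
\mathsf{P}_{\theta_1}-\mathsf{P}_{\theta_2} = \mathsf{P}_{\theta_1}K_{\theta_1}^n-\mathsf{P}_{\theta_2}K_{\theta_2}^n = \sum_{j=0}^{n-1}\nu\,K_{\theta_2}^{j} + (\mathsf{P}_{\theta_1}-\mathsf{P}_{\theta_2})K_{\theta_2}^n,
\end{flalign*}
where $\nu:=\mathsf{P}_{\theta_1}(K_{\theta_1}-K_{\theta_2})$ is a signed measure of total mass zero. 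This uses the telescoping identity $K_{\theta_1}^n-K_{\theta_2}^n = \sum_{i=0}^{n-1} K_{\theta_1}^{i}(K_{\theta_1}-K_{\theta_2})K_{\theta_2}^{n-1-i}$ together with $\mathsf{P}_{\theta_1}K_{\theta_1}^{i}=\mathsf{P}_{\theta_1}$ to collapse the left factors. The residual $(\mathsf{P}_{\theta_1}-\mathsf{P}_{\theta_2})K_{\theta_2}^n$ vanishes as $n\to\infty$ by Assumption \ref{assumtion1} (both $\mathsf{P}_{\theta_1}K_{\theta_2}^n$ and $\mathsf{P}_{\theta_2}K_{\theta_2}^n$ converge to the invariant measure $\mathsf{P}_{\theta_2}$), giving the series representation $\mathsf{P}_{\theta_1}-\mathsf{P}_{\theta_2}=\sum_{j=0}^\infty \nu\,K_{\theta_2}^j$.

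I would then bound each summand in total variation by $\|\nu K_{\theta_2}^j\|_{\mathrm{TV}}\leq \|\nu\|_{\mathrm{TV}}\min\{1,m\rho^j\}$: the first inequality is Markov non-expansiveness, while the second follows from the Jordan decomposition $\nu=\nu^+-\nu^-$ (whose two parts have equal total mass $\tfrac12\|\nu\|_{\mathrm{TV}}$ since $\nu$ has zero total mass) combined with the uniform ergodicity of $K_{\theta_2}$ applied to each normalized part. Splitting the series at $n_0=\lceil\log_\rho m^{-1}\rceil$, at which $m\rho^{n_0}\leq 1$, and using $\|\nu\|_{\mathrm{TV}}\leq |\mca|C\ltwo{\theta_1-\theta_2}$ together with $\sum_{j\geq n_0}m\rho^j\leq \tfrac{1}{1-\rho}$ yields the first claim. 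The second claim follows by inserting the intermediate measure $\mathsf{P}_{\theta_2}(dx)\pi_{\theta_1}(a\mid x)$ and applying the triangle inequality: one term reduces to $d_{TV}(\mathsf{P}_{\theta_1},\mathsf{P}_{\theta_2})$ (since integrating against $\pi_{\theta_1}(a\mid x)\mathbf{1}_B(x,a)$ yields a function in $[0,1]$), while the other is controlled by $\sup_x d_{TV}(\pi_{\theta_1}(\cdot\mid x),\pi_{\theta_2}(\cdot\mid x))\leq \tfrac{|\mca|C}{2}\ltwo{\theta_1-\theta_2}$, producing the extra ``$+1$'' in the coefficient.

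The main obstacle is the contraction estimate $\|\nu K^j\|_{\mathrm{TV}}\leq (\mathrm{const})\cdot m\rho^j\|\nu\|_{\mathrm{TV}}$ for zero-mass signed measures. One must use the fact that $\nu$ has zero total mass (otherwise one gets only the trivial non-expansive bound) and carefully track the Jordan decomposition and the $d_{TV}$ normalization so that the geometric tail contributes exactly the $\tfrac{1}{1-\rho}$ factor rather than an inflated constant; once this is in hand, the rest is bookkeeping with the telescoping identity and the triangle inequality.
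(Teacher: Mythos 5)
Your argument is correct, and its skeleton coincides with the paper's: both proofs reduce the problem to the one-step kernel perturbation $\sup_x\|K_{\theta_1}(\cdot|x)-K_{\theta_2}(\cdot|x)\|_{TV}\leq |\mca|C\ltwo{\theta_1-\theta_2}$ obtained from \eqref{eq:Lpi}, transfer it to the invariant measures with the constant $\lceil\log_\rho m^{-1}\rceil+\frac{1}{1-\rho}$, and then get the second claim by inserting the intermediate measure $\mathsf{P}_{\theta_2}(dx)\pi_{\theta_1}(a|x)$ and paying an extra one-step policy term (the ``$+1$''). The genuine difference is in how the transfer to invariant measures is justified: the paper simply invokes Theorem 3.1 of Mitrophanov (2005), which states exactly that $d_{TV}(\mathsf{P}_{\theta_1},\mathsf{P}_{\theta_2})\leq(\lceil\log_\rho m^{-1}\rceil+\frac{1}{1-\rho})\|K_1-K_2\|$ under Assumption \ref{assumtion1}, whereas you re-derive this perturbation bound from first principles via the telescoping identity, the series representation $\mathsf{P}_{\theta_1}-\mathsf{P}_{\theta_2}=\sum_{j\geq0}\nu K_{\theta_2}^j$ with $\nu=\mathsf{P}_{\theta_1}(K_{\theta_1}-K_{\theta_2})$, and the contraction of zero-mass signed measures under a uniformly ergodic kernel. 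This is essentially the standard proof of the cited theorem, so what your route buys is a self-contained argument at the cost of length; what the citation buys is brevity. The one place you should be careful (and which you correctly flag yourself) is the normalization: Assumption \ref{assumtion1} is stated for $d_{TV}$, i.e.\ half the total-variation norm of the difference, so the Jordan-decomposition step gives $\|\nu K_{\theta_2}^j\|_{TV}\leq 2m\rho^j\|\nu\|_{TV}$ in the full-norm convention; this factor of $2$ is exactly absorbed when you convert the final bound on $\|\mathsf{P}_{\theta_1}-\mathsf{P}_{\theta_2}\|_{TV}$ back to $d_{TV}$, so the stated constant (in fact a slightly better one, with $\frac{1}{2}\lceil\log_\rho m^{-1}\rceil$) survives. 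No gap.
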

	\begin{proof}
		For $\theta_i$, $i=1,2$, define the transition kernels respectively as follows:
		\begin{flalign}
		K_i(x,dy)=\sum_{a\in\mca}\mathsf{P}(dy|x,a)\pi_{\theta_i}(a|x).
		\end{flalign}
		Following from Theorem 3.1 in \cite{mitrophanov2005sensitivity}, we obtain
		\begin{flalign}\label{uboftv}
		d_{TV}(\mathsf{P}_{\theta_1},\mathsf{P}_{\theta_2})\leq\left(\lceil\log_\rho m^{-1}\rceil +\frac{1}{1-\rho}\right)\|K_1-K_2\|,
		\end{flalign}
		where $\|\cdot\|$ is the operator norm: $\|A\|:=\sup_{\|q\|_{TV}=1}\|qA\|_{TV}$, and $\|\cdot\|_{TV}$ denotes the total-variation norm.
		Then, we have
		\begin{flalign}
		\|K_1-K_2\|&=\sup_{\|q\|_{TV}=1}\left\| \int_\mcx q(dx)(K_1-K_2)(x,\cdot) \right\|_{TV}\nn\\
		&=\sup_{\|q\|_{TV}=1}\int_\mcx\left|\int_\mcx q(dx)(K_1-K_2)(x,dy)\right|\nn\\
		&\leq \sup_{\|q\|_{TV}=1}\int_\mcx\left|\int_\mcx |q(dx)||(K_1-K_2)(x,dy)|\right|\nn\\
		&= \sup_{\|q\|_{TV}=1}\int_\mcx\int_\mcx |q(dx)|\left|\sum_{a\in\mca}\mathsf{P}(dy|x,a)(\pi_{\theta_1}(a|x)-\pi_{\theta_2}(a|x))\right|\nn\\
		&\leq \sup_{\|q\|_{TV}=1}\int_\mcx\int_\mcx |q(dx)|\sum_{a\in\mca}\mathsf{P}(dy|x,a)\left|\pi_{\theta_1}(a|x)-\pi_{\theta_2}(a|x)\right|\nn\\
		&\leq  |\mca| C\ltwo{\theta_1-\theta_2}.
		\end{flalign}
		
		By definition,  $\mu_{\theta_i}(dx,a)=\mathsf{P}_{\theta_i}(dx)\pi_{\theta_i}(a|x)$, for $i=1,2$. Therefore, the second result follows after a few steps of simple computations. 
	\end{proof}

	\begin{lemma}\label{lemma:a10}
		For any $\theta\in\mR^N$ such that $\ltwo{\theta}\leq R$, 
		\begin{flalign}
		\langle\theta-\theta^*,\bar g(\theta)-\bar g(\theta^*)\rangle\leq -{w_s\ltwo{\theta-\theta^*}^2}.
		\end{flalign}
	\end{lemma}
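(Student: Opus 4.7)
My plan is to write $\bar g(\theta)$ in the affine form $A_\theta\theta + b_\theta$, isolate a clean quadratic term driven by $A_{\theta^*}$, and then absorb the $\theta$-dependence of the underlying sampling measure into an additional $C\lambda\|\theta-\theta^*\|_2^2$ perturbation. First I would use $\bar g(\theta^*) = A_{\theta^*}\theta^* + b_{\theta^*} = 0$ to write
\begin{flalign*}
\bar g(\theta) - \bar g(\theta^*) = A_{\theta^*}(\theta - \theta^*) + (A_\theta - A_{\theta^*})\theta + (b_\theta - b_{\theta^*}),
\end{flalign*}
and take the inner product with $\theta - \theta^*$, which yields the clean quadratic $(\theta-\theta^*)^T A_{\theta^*}(\theta-\theta^*)$ plus a perturbation coming entirely from the $\theta$-dependence of the sampling measure.

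Next, I would show that this perturbation is at most $C\lambda\|\theta-\theta^*\|_2^2$. The natural way is to represent $A_\theta$ and $b_\theta$ jointly as integrals against the product measure $\nu_\theta(dx,a,dy,b) = \mathsf{P}_\theta(dx)\pi_\theta(a|x)\mathsf{P}(dy|x,a)\pi_\theta(b|y)$, so that the perturbation is an integral of the $\mR^N$-valued function $f_\theta(x,a,y,b)=\phi(x,a)[r(x,a)+(\gamma\phi^T(y,b)-\phi^T(x,a))\theta]$ against the signed measure $\nu_\theta-\nu_{\theta^*}$. Since $\ltwo{\theta}\le R$, the same computation used in Lemma \ref{lemma:a8} gives $\ltwo{f_\theta(x,a,y,b)}\le G$ pointwise, and a standard TV duality argument then bounds the perturbation norm by an order of $G\cdot d_{TV}(\nu_\theta,\nu_{\theta^*})$.

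To control $d_{TV}(\nu_\theta,\nu_{\theta^*})$, I would chain two Lipschitz estimates: Lemma \ref{lemma:lipmeasure} (via Mitrophanov's perturbation inequality) controls the joint first-step marginal $\mu_\theta$ on $(X,A)$ by $|\mca|C(1+\lceil\log_\rho m^{-1}\rceil + (1-\rho)^{-1})\ltwo{\theta-\theta^*}$, while the Lipschitz property \eqref{eq:Lpi} of the second-step policy $\pi_\theta(b|y)$ supplies an extra $|\mca|C\ltwo{\theta-\theta^*}$. Aggregating these, the joint Lipschitz constant of $\nu_\theta$ is exactly $|\mca|C(2+\lceil\log_\rho m^{-1}\rceil+(1-\rho)^{-1})=C\lambda/G$, matching the definition of $\lambda$ used in Assumption \ref{assumption22}. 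Plugging back, applying Cauchy--Schwarz, and symmetrizing the resulting quadratic form yields
\begin{flalign*}
\langle\theta-\theta^*,\bar g(\theta)-\bar g(\theta^*)\rangle \le (\theta-\theta^*)^T(A_{\theta^*}+C\lambda I)(\theta-\theta^*) \le -w_s\ltwo{\theta-\theta^*}^2,
\end{flalign*}
where the final inequality is Assumption \ref{assumption22}.

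The main obstacle I expect is the careful TV bookkeeping on the joint measure $\nu_\theta$: the stationary distribution $\mathsf{P}_\theta$ is itself only accessible through a geometric-ergodicity-based perturbation inequality, and one must aggregate its Lipschitz contribution with those of both policy factors $\pi_\theta(a|x)$ and $\pi_\theta(b|y)$ without losing a multiplicative factor that would spoil the exact constant $C\lambda$ sitting in front of the identity. This is precisely what forces the ``$2$'' (rather than ``$1$'') inside the definition $\lambda = G|\mca|(2+\lceil\log_\rho m^{-1}\rceil+(1-\rho)^{-1})$; once this numerology is pinned down, the rest of the argument is a one-line algebraic manipulation plus Assumption \ref{assumption22}.
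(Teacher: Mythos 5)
Your proposal is correct and follows essentially the same route as the paper's proof: the same three-way decomposition (the clean quadratic $(\theta-\theta^*)^TA_{\theta^*}(\theta-\theta^*)$ plus perturbations from the $\theta$-dependence of the joint sampling measure $\mathsf{P}_\theta(dx)\pi_\theta(a|x)\mathsf{P}(dy|x,a)\pi_\theta(b|y)$), the same Mitrophanov-based Lipschitz bound from Lemma \ref{lemma:lipmeasure} plus the extra policy factor to produce the ``$2$'' in $\lambda$, and the same final appeal to Assumption \ref{assumption22}. The only cosmetic difference is that you bound the reward and transition perturbations jointly via a single integrand bounded by $G$, whereas the paper bounds them separately as $\lambda_1 C+\lambda_2 C\leq \lambda C$.
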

	\begin{proof}
		Let $\tilde{\theta}=\theta-\theta^*$. Denote by $d\psi_\theta=\mu_\theta (dx,a)\mathsf{P}(dy|x,a)\pi_{\theta}(b|y)$. By the definition of $\bar g$,  we have
		\begin{flalign}\label{eq:a47}
		&\langle\theta-\theta^*,\bar g(\theta)-\bar g(\theta^*)\rangle\nn\\
		&=\int_{x\in\mcx}\sum_{a\in\mca}\tilde{\theta}^T\phi(x,a)r(x,a)(\mu_{{\theta}}(dx,a)-\mu_{{\theta^*}}(dx,a)) \nn\\
		&\quad	+ \int_{x\in\mcx}\sum_{a\in\mca}\int_{y\in\mcx}\sum_{b\in\mca}\tilde{\theta}^T\phi(x,a)(\gamma\phi^T(y,b)-\phi^T(x,a))\left(\theta d\psi_{\theta}-   \theta^*d\psi_{\theta^*}\right)\nn\\
		&=\int_{x\in\mcx}\sum_{a\in\mca}\tilde{\theta}^T\phi(x,a)r(x,a)(\mu_{{\theta}}(dx,a)-\mu_{{\theta^*}}(dx,a)) \nn\\
		&\quad+ \int_{x\in\mcx}\sum_{a\in\mca}\int_{y\in\mcx}\sum_{b\in\mca}    \tilde{\theta}^T\phi(x,a)(\gamma\phi^T(y,b)-\phi^T(x,a))\theta(d\psi_\theta-   d\psi_{\theta^*})\nn\\
		&\quad + \int_{x\in\mcx}\sum_{a\in\mca}\int_{y\in\mcx}\sum_{b\in\mca}\tilde{\theta}^T\phi(x,a)(\gamma\phi^T(y,b)-\phi^T(x,a))\tilde \theta d\psi_{\theta^*}.
		\end{flalign}
		The first term in \eqref{eq:a47} can be bounded as follows:
		\begin{flalign}\label{eq:r21}
		&\int_{x\in\mcx}\sum_{a\in\mca}\tilde{\theta}^T\phi(X,A)r(X,A)(\mu_{{\theta}}(dx,a)-\mu_{{\theta^*}}(dx,a))\nn\\
		&\leq \|{\tilde\theta}\|_2r_{\max} \|\mu_\theta-\mu_{{\theta^*}}\|_{TV}\nn\\
		&\leq  \|{\tilde\theta}\|_2^2 r_{\max} |\mca|C\left(1+\lceil\log_\rho m^{-1}\rceil+\frac{1}{1-\rho}\right)\nn\\
		&\leq\lambda_1C\|\tilde{\theta}\|_2^2,
		\end{flalign}
		where the second inequality  follows from Lemma \ref{lemma:lipmeasure}, and $\lambda_1=r_{\max} |\mca|\left(2+\lceil\log_\rho m^{-1}\rceil+\frac{1}{1-\rho}\right)$.
		
		The second term in \eqref{eq:a47} can be bounded as follows:
		\begin{flalign}\label{eq:r22}
		&\int_{x\in\mcx}\sum_{a\in\mca}\int_{y\in\mcx}\sum_{b\in\mca}\tilde{\theta}^T\phi(x,a)(\gamma\phi^T(y,b)-\phi^T(x,a))\theta\left(d\psi_{\theta}-d\psi_{\theta^*}\right)\nn\\
		&\leq \|\tilde{\theta}\|_2(1+\gamma)\ltwo{\theta}\|\psi_\theta-\psi_{\theta^*}\|_{TV}\nn\\
		&\leq  \|\tilde{\theta}\|_2^2(1+\gamma)R\,|\mca|C\left(2+\lceil\log_\rho m^{-1}\rceil+\frac{1}{1-\rho}\right)\nn\\
		&=\lambda_2C\|\tilde{\theta}\|_2^2,
		\end{flalign}
		where the second inequality  follows from Lemma \ref{lemma:lipmeasure}, and $\lambda_2=(1+\gamma)R\,|\mca|\left(2+\lceil\log_\rho m^{-1}\rceil+\frac{1}{1-\rho}\right)$.
		
		Let $A_{\theta^*}=\mE_{\theta^*}[\phi(X,A)(\gamma\phi^T(Y,B)-\phi^T(X,A))]$, which is negative definite \cite{perkins2003convergent,Tsitsiklis1997}. The third term in \eqref{eq:a47} is equal to 
		$
		\tilde{\theta}^TA_{\theta^*}\tilde{\theta}.
		$
		
		Hence, \begin{flalign}
		&\langle\theta-\theta^*,\bar g(\theta)-\bar g(\theta^*)\rangle\leq \tilde{\theta}^T(A_{\theta^*}+C(\lambda_1+\lambda_2)I)\tilde{\theta}\leq -{w_s\ltwo{\theta-\theta^*}^2},
		\end{flalign}
		where $I$ is the identity matrix, and $-w_s$ is the largest eigenvalue of $\frac{1}{2}\big( (A_{\theta^*}+C(\lambda_1+\lambda_2)I)+(A_{\theta^*}+C(\lambda_1+\lambda_2)I)^T\big)$.
	\end{proof}
	\begin{lemma}\label{lemma:a7}
		For all $t\geq 0$, $\mathbf\Lambda_t(\theta_t)\leq 2G^2$.
	\end{lemma}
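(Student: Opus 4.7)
The plan is to bound $\mathbf\Lambda_t(\theta_t)$ directly by Cauchy--Schwarz, controlling each factor using the projection step and Lemma \ref{lemma:a8}. Concretely, I will write
\begin{flalign*}
\mathbf\Lambda_t(\theta_t) = \langle \theta_t - \theta^*,\, g_t(\theta_t) - \bar g(\theta_t)\rangle \leq \ltwo{\theta_t-\theta^*}\cdot\ltwo{g_t(\theta_t)-\bar g(\theta_t)},
\end{flalign*}
and bound the two norms separately.

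For the first factor, since the projection in \eqref{eq:updatesarsa} ensures $\ltwo{\theta_t}\le R$ for all $t\ge 0$, and the assumption $\ltwo{\theta^*}\le R$ is in force, the triangle inequality gives $\ltwo{\theta_t-\theta^*}\le 2R$. For the second factor, Lemma \ref{lemma:a8} yields $\ltwo{g_t(\theta_t)}\le G$ directly. I would then observe that $\bar g(\theta_t) = \mE_{\theta_t}[g_t(\theta_t)]$ is an expectation of quantities of the same form as $g_t$ evaluated at $\theta_t$ with $\ltwo{\theta_t}\le R$, so Jensen's inequality (applied to the convex map $v\mapsto\ltwo{v}$) combined with Lemma \ref{lemma:a8} gives $\ltwo{\bar g(\theta_t)}\le G$ as well. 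Hence $\ltwo{g_t(\theta_t)-\bar g(\theta_t)}\le 2G$ by the triangle inequality.

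Putting the two bounds together yields $\mathbf\Lambda_t(\theta_t)\le 2R\cdot 2G = 4RG$. To match the stated bound $2G^2$, I will use the definition $G=r_{\max}+2R\ge 2R$, from which $4RG\le 2G\cdot G = 2G^2$, completing the proof.

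There is essentially no obstacle here: the statement is a uniform boundedness fact whose only role is to make the bias term $2\alpha_t\mE[\mathbf\Lambda_t(\theta_t)]$ trivially summable over a burn-in window before the sharper decoupling argument of Step 3 in the main proof kicks in. The only mild subtlety is noticing that the naive Cauchy--Schwarz bound is $4RG$ rather than $2G^2$, which the authors' choice of $G=r_{\max}+2R$ absorbs via the inequality $2R\le G$.
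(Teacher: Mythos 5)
Your proof is correct and follows essentially the same route as the paper: Cauchy--Schwarz, the bounds $\ltwo{\theta_t-\theta^*}\le 2R$ and $\ltwo{g_t(\theta_t)-\bar g(\theta_t)}\le 2G$ via Lemma \ref{lemma:a8}, and then $4RG\le 2G^2$ from $2R\le G$. The paper's own proof is exactly this chain (written compactly as $\le 2R\cdot 2G\le 2G^2$), so there is nothing to add.
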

	\begin{proof}
		The result follows from Lemma \ref{lemma:a8}. Specifically,
		\begin{flalign}
		&\mathbf\Lambda_t(\theta)=\langle\theta-\theta^*,g_t(\theta)-\bar g(\theta)\rangle
		\leq \ltwo{\theta-\theta^*}\ltwo{g_t(\theta)-\bar g(\theta)}\leq 2R2G
		\leq 2G^2.
		\end{flalign}
		
	\end{proof}
	\begin{lemma}\label{lemma:LipSarsa}
		$|\mathbf\Lambda_t(\theta_1)-\mathbf\Lambda_t(\theta_2)|\leq (6+\lambda C )G\ltwo{\theta_1-\theta_2}.$
	\end{lemma}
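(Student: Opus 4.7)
My plan is to use a standard add-and-subtract decomposition to split the difference $\mathbf\Lambda_t(\theta_1)-\mathbf\Lambda_t(\theta_2)$ into two inner products, then reduce the whole statement to establishing Lipschitz continuity of the map $\theta\mapsto g_t(\theta)-\bar g(\theta)$. Specifically, setting $h_t(\theta)=g_t(\theta)-\bar g(\theta)$, I would write
\begin{flalign*}
\mathbf\Lambda_t(\theta_1)-\mathbf\Lambda_t(\theta_2)=\langle\theta_1-\theta_2,\,h_t(\theta_1)\rangle+\langle\theta_2-\theta^*,\,h_t(\theta_1)-h_t(\theta_2)\rangle,
\end{flalign*}
and bound each summand by Cauchy--Schwarz. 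For the first summand, Lemma \ref{lemma:a8} applied to $g_t$ and to $\bar g(\theta)=\mE_\theta[g_t(\theta)]$ gives $\ltwo{h_t(\theta_1)}\le 2G$. For the second summand, since $G=r_{\max}+2R\ge 2R\ge\ltwo{\theta_2-\theta^*}$, it suffices to multiply $G$ by a Lipschitz bound on $h_t$.

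The easy half of the Lipschitz bound on $h_t$ is the bound on $g_t$: since $g_t(\theta)=\phi(x_t,a_t)(r(x_t,a_t)+\gamma\phi^T(x_{t+1},a_{t+1})\theta-\phi^T(x_t,a_t)\theta)$ is affine in $\theta$, the norm condition $\ltwo{\phi}\le1$ and $\gamma\le1$ immediately yield $\ltwo{g_t(\theta_1)-g_t(\theta_2)}\le 2\ltwo{\theta_1-\theta_2}$. The harder half, and the main obstacle, is the Lipschitz bound on $\bar g$, because the expectation itself depends on $\theta$ through both the invariant distribution $\mathsf P_\theta$ and the behavior policy $\pi_\theta$. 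Writing $f_\theta(x,a,y,b)=\phi(x,a)(r(x,a)+\gamma\phi^T(y,b)\theta-\phi^T(x,a)\theta)$ and $d\psi_\theta=\mu_\theta(dx,a)\mathsf P(dy|x,a)\pi_\theta(b|y)$, I would split
\begin{flalign*}
\bar g(\theta_1)-\bar g(\theta_2)=\int(f_{\theta_1}-f_{\theta_2})\,d\psi_{\theta_1}+\int f_{\theta_2}(d\psi_{\theta_1}-d\psi_{\theta_2}).
\end{flalign*}
The first integral is controlled by the affine-in-$\theta$ structure ($\le 2\ltwo{\theta_1-\theta_2}$), while the second requires bounding $d_{TV}(\psi_{\theta_1},\psi_{\theta_2})$; using $\ltwo{f_{\theta_2}}\le G$, this part is at most $G\,d_{TV}(\psi_{\theta_1},\psi_{\theta_2})$.

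To bound that total-variation distance, I would combine Lemma \ref{lemma:lipmeasure} (which gives Lipschitz continuity of $\mu_\theta$) with the policy Lipschitz property \eqref{eq:Lpi} applied to the $\pi_\theta(b|y)$ factor, producing
\begin{flalign*}
d_{TV}(\psi_{\theta_1},\psi_{\theta_2})\le|\mca|C\Bigl(2+\lceil\log_\rho m^{-1}\rceil+\tfrac{1}{1-\rho}\Bigr)\ltwo{\theta_1-\theta_2}.
\end{flalign*}
Multiplying by $G$ and recognizing the definition $\lambda=G|\mca|(2+\lceil\log_\rho m^{-1}\rceil+\tfrac{1}{1-\rho})$ gives exactly $\lambda C\ltwo{\theta_1-\theta_2}$, so that $\ltwo{\bar g(\theta_1)-\bar g(\theta_2)}\le(2+\lambda C)\ltwo{\theta_1-\theta_2}$ and hence $\ltwo{h_t(\theta_1)-h_t(\theta_2)}\le(4+\lambda C)\ltwo{\theta_1-\theta_2}$. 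Assembling the two Cauchy--Schwarz pieces yields $2G+G(4+\lambda C)=(6+\lambda C)G$ as the overall Lipschitz constant, matching the claim. The only delicate part is being consistent about the TV convention and carefully decomposing $d\psi_{\theta_1}-d\psi_{\theta_2}$ into the $\mu_\theta$ contribution (handled by Lemma \ref{lemma:lipmeasure}) and a pure $\pi_\theta$ contribution (handled by \eqref{eq:Lpi}), which accounts for the extra ``$+1$'' inside the parenthesis that promotes the Lemma \ref{lemma:lipmeasure} constant to the full $\lambda/G$.
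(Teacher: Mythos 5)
Your proposal is correct and follows essentially the same route as the paper's proof: the same add-and-subtract decomposition of the inner product, the bound $\ltwo{g_t(\theta_1)-\bar g(\theta_1)-(g_t(\theta_2)-\bar g(\theta_2))}\le\ltwo{g_t(\theta_1)-g_t(\theta_2)}+\ltwo{\bar g(\theta_1)-\bar g(\theta_2)}$ via the affine structure of $g_t$, and Lemma \ref{lemma:lipmeasure} plus \eqref{eq:Lpi} to control the measure change in $\bar g$. The only cosmetic difference is that the paper splits the measure-change term into a reward part (constants $\lambda_1$) and a value part ($\lambda_2$) with $\lambda_1+\lambda_2\le\lambda$, whereas you absorb both into a single total-variation bound on $\psi_\theta$ with the constant $\lambda$; both yield the same final constant $(6+\lambda C)G$.
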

	\begin{proof}
		
		It is clear that
		\begin{flalign}\label{eq:aa22}
		&|\mathbf\Lambda_t(\theta_1)-\mathbf\Lambda_t(\theta_2)|\nn\\
		&=|\langle\theta_1-\theta^*,g_t(\theta_1)-\bar g(\theta_1)\rangle-\langle\theta_2-\theta^*,g_t(\theta_2)-\bar g(\theta_2)\rangle|\nn\\
		&=|\langle\theta_1-\theta^*,g_t(\theta_1)-\bar g(\theta_1)-(g_t(\theta_2)-\bar g(\theta_2))\rangle+     \langle\theta_1-\theta^*-(\theta_2-\theta^*),g_t(\theta_2)-\bar g(\theta_2)\rangle|\nn\\
		&\leq 2R \ltwo{g_t(\theta_1)-\bar g(\theta_1)-(g_t(\theta_2)-\bar g(\theta_2))} + 2G\ltwo{\theta_1-\theta_2}.
		\end{flalign}
		The first term in \eqref{eq:aa22}  can be further upper bounded as follows,
		\begin{flalign}\label{aaeq:23}
		&\ltwo{g_t(\theta_1)-\bar g(\theta_1)-(g_t(\theta_2)-\bar g(\theta_2))}\leq \ltwo{g_t(\theta_1)-g_t(\theta_2)} + \ltwo{\bar g(\theta_1)-\bar g(\theta_2))}.
		\end{flalign}
		For the first term in \eqref{aaeq:23}, it follows that
		\begin{flalign}\label{eq:aaa222}
		\ltwo{g_t(\theta_1)-g_t(\theta_2)}&=\ltwo{\phi(x_t,a_t)\left(\gamma(\phi^T(x_{t+1},a_{t+1})\theta_1-\phi^T(x_{t+1},a_{t+1})\theta_2)-\phi^T(x_t,a_t)(\theta_1-\theta_2)\right)}\nn\\
		&\leq \left|\gamma\phi^T(x_{t+1},a_{t+1})(\theta_1-\theta_2)\right|+\left|\phi^T(x_t,a_t)(\theta_1-\theta_2)\right|\nn\\
		&\leq (1+\gamma)\ltwo{\theta_1-\theta_2}.
		\end{flalign}
		
		For the second term in \eqref{aaeq:23}, it can be shown that
		\begin{flalign}\label{aeq:27}
		&\ltwo{\bar g(\theta_1)-\bar g(\theta_2)}\nn\\
		&=\Big\|\int_{x\in\mcx}\sum_{a\in\mca} \phi(x,a)r(x,a)(\mu_{{\theta_1}}(dx,a)-\mu_{{\theta_2}}(dx,a)) \nn\\
		&\quad	+ \int_{x\in\mcx}\sum_{a\in\mca}\int_{y\in\mcx}\sum_{b\in\mca} \phi(x,a)(\gamma\phi^T(y,b)-\phi^T(x,a))\left(\theta_1 d\psi_{\theta_1}-   \theta_2 d\psi_{\theta_2}\right)\Big\|_2\nn\\
		&=\Big\|\int_{x\in\mcx}\sum_{a\in\mca} \phi(x,a)r(x,a)(\mu_{{\theta_1}}(dx,a)-\mu_{{\theta_2}}(dx,a)) \nn\\
		&\quad+ \int_{x\in\mcx}\sum_{a\in\mca}\int_{y\in\mcx}\sum_{b\in\mca}    \phi(x,a)(\gamma\phi^T(y,b)-\phi^T(x,a))\theta(d\psi_{\theta_1}-   d\psi_{\theta_2})\nn\\
		&\quad + \int_{x\in\mcx}\sum_{a\in\mca}\int_{y\in\mcx}\sum_{b\in\mca}\phi(x,a)(\gamma\phi^T(y,b)-\phi^T(x,a))( \theta_1-\theta_2) d\psi_{\theta_2}\Big\|_2\nn\\
		&\overset{(a)}{\leq} \lambda_1C\ltwo{\theta_1-\theta_2} + \lambda_2C\ltwo{\theta_1-\theta_2} + (1+\gamma)\ltwo{\theta_1-\theta_2}\nn\\
		&\leq (\lambda C +1+\gamma)\ltwo{\theta_1-\theta_2}
		\end{flalign}
		where (a) can be shown via similar steps to those in \eqref{eq:r21} and \eqref{eq:r22}, and $\lambda =G|\mathcal A|(2+\lceil\log_\rho m^{-1}\rceil+\frac{1}{1-\rho})$.
		%
		This completes the proof.
	\end{proof}
	
	We next prove the major technical lemma for proving Theorem \ref{thm:sarsa}.
	\begin{lemma}\label{lemma:a13}
		Consider the case with diminishing step size. For any $\tau>0$, and $t>\tau$, 
		\begin{flalign}
		\mE[\mathbf\Lambda_t(\theta_t)]\leq \frac{C|\mca|G^3\tau}{w}\log\frac{t}{t-\tau}+4G^2m\rho^{\tau-1} +\frac{(6+\lambda C)G^2}{2w}\log\frac{t}{t-\tau}.
		\end{flalign}
	\end{lemma}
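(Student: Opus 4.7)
The plan is to prove this lemma by establishing the three key estimates \eqref{eq:6}, \eqref{eq:7}, \eqref{eq:8} sketched in Step~3 of the proof of Theorem \ref{thm:sarsa}, then adding them together. Throughout, I will view $\mathbf\Lambda_t$ as a function of both $\theta$ and the observation tuple $O_t=(X_t,A_t,X_{t+1},A_{t+1})$, writing $\mathbf\Lambda_t(\theta,O_t)$, and exploit the slow drift of $\theta_t$ induced by the projection step and diminishing step size.

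\textbf{Step~A: bound $\mathbf\Lambda_t(\theta_t,O_t)-\mathbf\Lambda_t(\theta_{t-\tau},O_t)$.} From the update \eqref{eq:updatesarsa}, the projection is non-expansive and Lemma \ref{lemma:a8} bounds $\|g_i(\theta_i)\|_2\le G$, so $\|\theta_t-\theta_{t-\tau}\|_2\le G\sum_{i=t-\tau}^{t-1}\alpha_i$. Applying the Lipschitz bound of Lemma \ref{lemma:LipSarsa} then yields $\mathbf\Lambda_t(\theta_t,O_t)\le \mathbf\Lambda_t(\theta_{t-\tau},O_t)+(6+\lambda C)G^2\sum_{i=t-\tau}^{t-1}\alpha_i$. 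For the diminishing step size $\alpha_i=\frac{1}{2w(i+1)}$, $\sum_{i=t-\tau}^{t-1}\alpha_i\le \frac{1}{2w}\log\frac{t}{t-\tau}$, producing the third term of the claimed bound.

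\textbf{Step~B: construct the auxiliary chain and bound $\mE[\mathbf\Lambda_t(\theta_{t-\tau},\tilde O_t)]$.} Define $\{\tilde X_s,\tilde A_s\}$ to agree with $\{X_s,A_s\}$ up to time $t-\tau$, and for $s>t-\tau$ evolve under the frozen behavior policy $\pi_{\theta_{t-\tau}}$ with the same transition kernel $\mathsf P$. Let $\tilde O_t=(\tilde X_t,\tilde A_t,\tilde X_{t+1},\tilde A_{t+1})$. Since the kernel is frozen after time $t-\tau$, Assumption \ref{assumtion1} applied to this homogeneous Markov chain gives that the conditional law of $\tilde O_t$ is within total-variation distance $m\rho^{\tau-1}$ of the stationary distribution $\mu_{\theta_{t-\tau}}$ used to define $\bar g$. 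Because $|\mathbf\Lambda_t(\theta_{t-\tau},\cdot)|\le 2G^2$ by Lemma \ref{lemma:a7}, taking conditional expectation and using the representation of TV as a supremum over bounded test functions yields $\mE[\mathbf\Lambda_t(\theta_{t-\tau},\tilde O_t)]\le 4G^2 m\rho^{\tau-1}$, which is the second term in the bound.

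\textbf{Step~C: control the coupling error $\mE[\mathbf\Lambda_t(\theta_{t-\tau},O_t)-\mathbf\Lambda_t(\theta_{t-\tau},\tilde O_t)]$.} This is the main obstacle, since it requires quantifying how far the true chain (with a time-varying policy) drifts from the frozen-policy auxiliary chain over the $\tau$-step window. I will bound the joint TV distance between the laws of $O_t$ and $\tilde O_t$ by a telescoping/coupling argument: for each step $s\in\{t-\tau+1,\dots,t+1\}$, the only source of divergence is the mismatch between $\pi_{\theta_{s-1}}$ and $\pi_{\theta_{t-\tau}}$, which by the Lipschitz condition \eqref{eq:Lpi} is at most $C\|\theta_{s-1}-\theta_{t-\tau}\|_2\le CG\sum_{i=t-\tau}^{s-2}\alpha_i$ per action. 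Summing over the $|\mca|$ actions and the $\tau$ steps, and using Lemma \ref{lemma:a7} to bound $|\mathbf\Lambda_t(\theta_{t-\tau},\cdot)|\le 2G^2$, the resulting accumulated TV bound multiplied by the magnitude of $\mathbf\Lambda_t$ gives a term of order $C|\mca|G^3\sum_{s=t-\tau}^{t-1}\sum_{i=t-\tau}^{s-1}\alpha_i\le \frac{C|\mca|G^3\tau}{2w}\log\frac{t}{t-\tau}$, matching the first term of the claim (up to constants absorbed into the stated bound).

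\textbf{Step~D: conclude.} Taking expectations in Step~A and combining with Steps~B and~C gives
\[
\mE[\mathbf\Lambda_t(\theta_t,O_t)]\le \frac{C|\mca|G^3\tau}{w}\log\frac{t}{t-\tau}+4G^2m\rho^{\tau-1}+\frac{(6+\lambda C)G^2}{2w}\log\frac{t}{t-\tau},
\]
as required. The technical heart is Step~C; the challenge is to avoid inflating the bound when the two chains are compared at identical trajectories through a careful coupling that exploits the Lipschitz dependence of $\pi_\theta$ on $\theta$ together with the cumulative small drift of $\theta_s$ driven by diminishing step sizes.
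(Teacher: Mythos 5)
Your proposal is correct and follows essentially the same route as the paper's proof: the same Lipschitz-plus-drift bound reducing $\mathbf\Lambda_t(\theta_t,O_t)$ to $\mathbf\Lambda_t(\theta_{t-\tau},O_t)$, the same frozen-policy auxiliary chain combined with Assumption \ref{assumtion1} and the mean-zero property of $\mathbf\Lambda_t$ under the stationary law, and the same telescoping total-variation comparison between the true and auxiliary chains driven by the Lipschitz condition \eqref{eq:Lpi}. The only difference is that your Step~C is a sketch of the coupling that the paper carries out in detail in Appendix~B (via the intermediate measures $M_1,M_2,M_3$ and the recursion on $\|\mP(X_s\in\cdot)-\mP(\tilde X_s\in\cdot)\|_{TV}$), and your constants carry a factor-of-two slack relative to the stated bound, which is harmless.
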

	\begin{proof}
		
		\textbf{{Step 1.} }
		For any $i\geq 0$, by the update rule in \eqref{eq:updatesarsa}, it is clear that
		\begin{flalign}\label{eq:aa27}
		\ltwo{\theta_{i+1}-\theta_i}&=\ltwo{\proj(\theta_i+\alpha_ig_i(\theta_i))-\proj(\theta_i)}\nn\\
		&\leq \ltwo{\alpha_ig_i(\theta_i)}\nn\\
		&\leq \alpha_i G.
		\end{flalign}
		Therefore, for any $\tau\geq 0$, $\ltwo{\theta_t-\theta_{t-\tau}}\leq \sum_{i=t-\tau}^{t-1}\ltwo{\theta_{i+1}-\theta_{i}}\leq G\sum_{i=t-\tau}^{t-1}\alpha_i,$ which, together with the Lipschitz continuous property of $\mathbf\Lambda_t(\theta)$ in Lemma \ref{lemma:LipSarsa}, implies that
		$
		|\mathbf \Lambda_t(\theta_t)-\mathbf \Lambda_t(\theta_{t-\tau})|\leq 6G^2\sum_{i=t-\tau}^{t-1}\alpha_i,
		$
		and thus 
		\begin{flalign} \label{eq:aa28}
		\mathbf \Lambda_t(\theta_t)\leq \mathbf \Lambda_t(\theta_{t-\tau})+ (6+\lambda C)G^2\sum_{i=t-\tau}^{t-1}\alpha_i.
		\end{flalign}

		\textbf{{Step 2.}}  For convenience, rewrite ${\mathbf\Lambda}_t(\theta_{t-\tau})={\mathbf\Lambda}_t(\theta_{t-\tau},O_t)$, where $O_t=(X_t,A_t,X_{t+1},A_{t+1})$.
		Conditioning on $\theta_{t-\tau}$ and $X_{t-\tau+1}$, we construct the following Markov chain:
		\begin{flalign}\label{eq:newmarkovchain}
		X_{t-\tau+1}\rightarrow X_{t-\tau+2}\rightarrow \tilde X_{t-\tau+3}\rightarrow \cdots\rightarrow \tilde X_t\rightarrow \tilde X_{t+1},
		\end{flalign}
		where 
		for $k=t-\tau+2,\ldots,t+1$,
		\begin{flalign}
		\mP(\tilde X_{k}\in \cdot|\theta_{t-\tau},\tilde X_{k-1})=\sum_{a\in\mca} \pi_{\theta_{t-\tau}}(\tilde A_{k-1}=a|\tilde X_{k-1}) \mathsf{P}(\tilde X_{k}\in \cdot|\tilde A_{k-1}=a,\tilde X_{k-1}),
		\end{flalign}
		where $\tilde X_{t-\tau+1}=X_{t-\tau+1}$.
		In short, conditioning on $\theta_{t-\tau}$ and $X_{t-\tau+1}$, this new auxiliary Markov chain is generated by repeatedly applying the same behavior policy $\pi_{\theta_{t-\tau}}$.
		
		From the construction of the Markov chain in \eqref{eq:newmarkovchain}, and the assumption that for any $\theta$, the Markov chain induced by repeated applying the same policy $\pi_\theta$ is uniformly ergodic, and satisfies Assumption \ref{assumtion1}, it follows that conditioning on $(\theta_{t-\tau},X_{t-\tau+1})$, for any $k\geq t-\tau+1$,
		\begin{flalign}
		\|\mP(\tilde X_k\in \cdot |\theta_{t-\tau},X_{t-\tau+1})-\mathsf{P}_{\theta_{t-\tau}}\|_{TV}\leq m\rho^{k-(t-\tau+1)}.
		\end{flalign}
		It can be shown that 
		\begin{flalign}
		\mE&[{\mathbf\Lambda}_t(\theta_{t-\tau},\tilde O_t)|\theta_{t-\tau},X_{t-\tau+1}]  -	\mE[{\mathbf\Lambda}_t(\theta_{t-\tau}, O_t')|\theta_{t-\tau},X_{t-\tau+1}]  \nn\\
		&\leq 2G^2 (m\rho^{\tau-1}+m\rho^\tau)\leq 4G^2m\rho^{\tau-1}.
		\end{flalign}
		Since for any fixed $\theta$, $\mE[{\mathbf\Lambda}_t(\theta)]=0$, thus $\mE[{\mathbf\Lambda}_t(\theta_{t-\tau}, O_t')|\theta_{t-\tau},X_{t-\tau+1}]=0$, where $O'_t$ are independently generated by $\mathsf{P}_{\theta_{t-\tau}}$ and the policy $\pi_{\theta_{t-\tau}}$. It then follows that
		\begin{flalign}
		\mE[{\mathbf\Lambda}_t(\theta_{t-\tau},\tilde O_t)]\leq 2G^2 (m\rho^{\tau-1}+m\rho^\tau)\leq 4G^2m\rho^{\tau-1}.
		\end{flalign}

		\textbf{\textit{Step 3.} }
		Conditioning on $\theta_{t-\tau}$ and $X_{t-\tau+1}$,  we have
		\begin{flalign}\label{aa:36}
		\mE[&{\mathbf\Lambda}_t(\theta_{t-\tau},O_t)|\theta_{t-\tau},X_{t-\tau+1}]-\mE[{\mathbf\Lambda}_t(\theta_{t-\tau},\tilde O_t)|\theta_{t-\tau},X_{t-\tau+1}]\nn\\
		&\leq 2G^2 \|\mP(O_t\in \cdot|\theta_{t-\tau},X_{t-\tau+1})-\mP(\tilde O_t\in \cdot|\theta_{t-\tau},X_{t-\tau+1})\|_{TV}
		\end{flalign}
		In the following, we will develop an upper bound on the total-variation norm above, i.e., bounding the difference between the Markov chain induced by the original SARSA algorithm and the newly designed auxiliary Markov chain. 
		We first show that 
		\begin{flalign}\label{eq:mar}
		&\|\mP(O_t\in \cdot|\theta_{t-\tau},X_{t-\tau+1})-\mP(\tilde O_t\in \cdot|\theta_{t-\tau},X_{t-\tau+1})\|_{TV}\nn\\
		&\leq \|\mP(X_t\in\cdot|\theta_{t-\tau},X_{t-\tau+1})- \mP(\tilde X_t\in\cdot|\theta_{t-\tau},X_{t-\tau+1})  \|_{TV}\nn\\
		&\quad
		+ C|\mca|G\sum_{i=t-\tau}^{t-1}\alpha_i+C|\mca|G\sum_{i=t-\tau}^{t-2}\alpha_i.
		\end{flalign}
		The proof of \eqref{eq:mar} can be found in Section \ref{app:b}.

		To bound the first term in \eqref{eq:mar},   it first can be shown that
		\begin{flalign}
		\mP(X_t\in\cdot|\theta_{t-\tau},X_{t-\tau+1}) &=\int_\mcx \mP(X_{t-1}=dx,X_t\in\cdot|\theta_{t-\tau},X_{t-\tau+1}) \nn\\
		&=\int_\mcx \mP(X_{t-1}=dx|\theta_{t-\tau},X_{t-\tau+1}) \mP(X_t\in\cdot|\theta_{t-\tau},X_{t-\tau+1},X_{t-1}=x),
		\end{flalign}
		where 
		\begin{flalign}\label{eq:a393}
		&\mP(X_t\in\cdot|\theta_{t-\tau},X_{t-\tau+1},X_{t-1}=x)\nn\\
		&=\sum_{a\in\mca}\mP(X_t\in\cdot,A_{t-1}=a|\theta_{t-\tau},X_{t-\tau+1},X_{t-1}=x)\nn\\
		&=	\sum_{a\in\mca}\mathsf{P}(X_t\in\cdot|x,a)\mP(A_{t-1}=a|\theta_{t-\tau},X_{t-\tau+1},X_{t-1}=x)\nn\\
		&=\sum_{a\in\mca}\mathsf{P}(X_t\in\cdot|x,a)\mE_{\theta_{t-2}}[\mP(A_{t-1}=a|\theta_{t-2},\theta_{t-\tau},X_{t-\tau+1},X_{t-1}=x)|\theta_{t-\tau},X_{t-\tau+1},X_{t-1}=x]
		\nn\\
		&=\sum_{a\in\mca}\mathsf{P}(X_t\in\cdot|x,a)\mE_{\theta_{t-2}}[\pi_{\theta_{t-2}}(a|x)|\theta_{t-\tau},X_{t-\tau+1},X_{t-1}=x].
		\end{flalign}
		Similarly, it can be obtained that
		\begin{flalign}
		&\mP(\tilde X_t\in\cdot|\theta_{t-\tau},X_{t-\tau+1}) \nn\\
		&=\int_\mcx \mP(\tilde X_{t-1}=dx,\tilde X_t\in\cdot|\theta_{t-\tau},X_{t-\tau+1}) \nn\\
		&=\int_\mcx \mP(\tilde X_{t-1}=dx|\theta_{t-\tau},X_{t-\tau+1}) \mP(\tilde X_t\in\cdot|\theta_{t-\tau},X_{t-\tau+1},\tilde X_{t-1}=x),
		\end{flalign}
		where
		\begin{flalign}\label{eq:a394}
		&\mP(\tilde X_t\in\cdot|\theta_{t-\tau},X_{t-\tau+1},\tilde X_{t-1}=x)\nn\\
		&=\sum_{a\in\mca}\pi_{\theta_{t-\tau}}(A_{t-1}=a|\tilde X_{t-1}=x)\mathsf{P}(\tilde X_t\in\cdot|x,a).
		\end{flalign}
		We then bound $\|\mP(X_t\in\cdot|\theta_{t-\tau},X_{t-\tau+1})  - \mP(\tilde X_t\in\cdot|\theta_{t-\tau},X_{t-\tau+1}) \|_{TV}$ recursively as follows:
		\begin{flalign}\label{eq:a42}
		\|\mP&(X_t\in\cdot|\theta_{t-\tau},X_{t-\tau+1})  - \mP(\tilde X_t\in\cdot|\theta_{t-\tau},X_{t-\tau+1}) \|_{TV}\nn\\
		&=\frac{1}{2}\int_{x'\in\mathcal X}
		\bigg|
		\mP(X_t=dx'|\theta_{t-\tau},X_{t-\tau+1})
		-\mP(\tilde X_t=dx'|\theta_{t-\tau},X_{t-\tau+1})
		\bigg|\nn\\
		&=\frac{1}{2}\int_{ x'\in \mathcal X}
		\bigg|
		\int_{ x\in \mathcal X} \mP(X_{t-1}=dx|\theta_{t-\tau},X_{t-\tau+1}) \mP(X_t=dx'|\theta_{t-\tau},X_{t-\tau+1},X_{t-1}=x)\nn\\
		&\quad -\int_{ x\in \mathcal X} \mP(\tilde X_{t-1}=dx|\theta_{t-\tau},X_{t-\tau+1}) \mP(\tilde X_t=dx'|\theta_{t-\tau},X_{t-\tau+1},\tilde X_{t-1}=x)
		\bigg|\nn\\
		&\leq \frac{1}{2}\int_{ x'\in \mathcal X}\int_{ x\in \mathcal X} 
		\bigg|
		\mP(X_{t-1}=dx|\theta_{t-\tau},X_{t-\tau+1}) \mP(X_t=dx'|\theta_{t-\tau},X_{t-\tau+1},X_{t-1}=x)\nn\\
		&\quad -\mP(\tilde X_{t-1}=dx|\theta_{t-\tau},X_{t-\tau+1}) \mP(\tilde X_t=dx'|\theta_{t-\tau},X_{t-\tau+1},\tilde X_{t-1}=x)
		\bigg|\nn\\
		&\leq  \frac{1}{2}\int_{ x'\in \mathcal X}\int_{ x\in \mathcal X} 
		\bigg(\bigg|
		\mP(X_{t-1}=dx|\theta_{t-\tau},X_{t-\tau+1}) \mP(X_t=dx'|\theta_{t-\tau},X_{t-\tau+1},X_{t-1}=x)\nn\\
		&\quad -\mP(\tilde X_{t-1}=dx|\theta_{t-\tau},X_{t-\tau+1}) \mP(X_t=dx'|\theta_{t-\tau},X_{t-\tau+1},X_{t-1}=x)
		\bigg|\nn\\
		&\quad + \bigg|
		\mP(\tilde X_{t-1}=dx|\theta_{t-\tau},X_{t-\tau+1}) \mP(X_t=dx'|\theta_{t-\tau},X_{t-\tau+1},X_{t-1}=x)\nn\\
		&\quad -\mP(\tilde X_{t-1}=dx|\theta_{t-\tau},X_{t-\tau+1}) \mP(\tilde X_t=dx'|\theta_{t-\tau},X_{t-\tau+1},\tilde X_{t-1}=x)
		\bigg|\bigg)\nn\\
		&\leq \|\mP(X_{t-1}\in\cdot|\theta_{t-\tau},X_{t-\tau+1})- \mP(\tilde X_{t-1}\in\cdot|\theta_{t-\tau},X_{t-\tau+1})\|_{TV}  \nn\\
		&\quad + \sup_{x\in\mathcal X} 
		\| \mP(X_t\in\cdot|\theta_{t-\tau},X_{t-\tau+1},X_{t-1}=x)-\mP(\tilde X_t\in\cdot|\theta_{t-\tau},X_{t-\tau+1},\tilde X_{t-1}=x)\|_{TV}.
		\end{flalign}
		The second term in \eqref{eq:a42} can be bounded using \eqref{eq:a393} and \eqref{eq:a394} as follows. For any $x\in\mathcal X$, it follows that
		\begin{flalign}
		&\| \mP(X_t\in\cdot|\theta_{t-\tau},X_{t-\tau+1},X_{t-1}=x)-\mP(\tilde X_t\in\cdot|\theta_{t-\tau},X_{t-\tau+1},\tilde X_{t-1}=x)\|_{TV}\nn\\
		&=\frac{1}{2}\int_{ x'\in \mathcal X}
		\bigg|
		\sum_{a\in\mca}\mathsf{P}(dx'|x,a)\mE_{\theta_{t-2}}[\pi_{\theta_{t-2}}(a|x)|\theta_{t-\tau},X_{t-\tau+1},X_{t-1}=x]
		- \mathsf{P}(dx'|x,a)\pi_{\theta_{t-\tau}}(a|x)
		\bigg|\nn\\
		&\leq \frac{1}{2}\int_{ x'\in \mathcal X}\sum_{a\in\mca}\mathsf{P}(dx'|x,a)
		\bigg|
		\mE_{\theta_{t-2}}[\pi_{\theta_{t-2}}(a|x)-\pi_{\theta_{t-\tau}}(a|x)|\theta_{t-\tau},X_{t-\tau+1},X_{t-1}=x]
		\bigg|\nn\\
		&\leq C|\mca|G\sum_{i=t-\tau}^{t-3}\alpha_i,
		\end{flalign}
		where the last inequality is due to the fact that  
		\begin{flalign}
		\ltwo{\theta_{t-2}-\theta_{t-\tau}}\leq  \sum_{i=t-\tau}^{t-3}\ltwo{\theta_{i+1}-\theta_{i}}\leq G\sum_{i=t-\tau}^{t-1}\alpha_i,
		\end{flalign}
		and $\pi_\theta$ is Lipschitz condition in $\theta$ as in \eqref{eq:Lpi}.

		Thus,  $\|\mP(X_t\in\cdot|\theta_{t-\tau},X_{t-\tau+1})  - \mP(\tilde X_t\in\cdot|\theta_{t-\tau},X_{t-\tau+1}) \|_{TV}$ can be bounded recursively as follows:
		\begin{flalign}
		\|\mP&(X_t\in\cdot|\theta_{t-\tau},X_{t-\tau+1})  - \mP(\tilde X_t\in\cdot|\theta_{t-\tau},X_{t-\tau+1}) \|_{TV}\nn\\
		&\leq \|\mP(X_{t-1}\in\cdot|\theta_{t-\tau},X_{t-\tau+1})- \mP(\tilde X_{t-1}\in\cdot|\theta_{t-\tau},X_{t-\tau+1})\|_{TV} +C|\mca|G\sum_{i=t-\tau}^{t-3}\alpha_i.
		\end{flalign}
		Doing this recursively implies that 
		\begin{flalign}
		\|\mP&(O_t\in\cdot|\theta_{t-\tau},X_{t-\tau+1})  - \mP(\tilde O_t\in\cdot|\theta_{t-\tau},X_{t-\tau+1}) \|_{TV}\nn\\
		&\leq \|\mP(X_{t-\tau+2}\in\cdot|\theta_{t-\tau},X_{t-\tau+1})- \mP(\tilde X_{t-\tau+2}\in\cdot|\theta_{t-\tau},X_{t-\tau+1})\|_{TV}+ C|\mca|G\sum_{j=t-\tau}^{t-1}\sum_{i=t-\tau}^{j}\alpha_i\nn\\
		&= C|\mca|G\sum_{j=t-\tau}^{t-1}\sum_{i=t-\tau}^{j}\alpha_i\nn\\
		&\leq\frac{C|\mca|G\tau}{2w}\log\frac{t}{t-\tau}.
		\end{flalign}
		
		Combining Steps 1, 2 and 3  completes the proof.
	\end{proof}
	
	\section{Proof of Equation \eqref{eq:mar}}\label{app:b}
	The total variation in \eqref{aa:36} can be written as follows:
	\begin{flalign}\label{eq:a43}
	&\|\mP(O_t\in \cdot|\theta_{t-\tau},X_{t-\tau+1})-\mP(\tilde O_t\in \cdot|\theta_{t-\tau},X_{t-\tau+1})\|_{TV}\nn\\
	&=\frac{1}{2}\int_{\mathcal X}\sum_{\mathcal A}\int_{\mathcal X}\sum_{\mathcal A} \bigg| \mP(X_t=dx,A_t=a,X_{t+1}=dx',A_{t+1}=a'|\theta_{t-\tau},X_{t-\tau+1}) \nn\\
	&\quad -   \mP(\tilde X_t=dx,\tilde A_t=a,\tilde X_{t+1}=dx',\tilde A_{t+1}=a'|\theta_{t-\tau},X_{t-\tau+1}) \bigg|.
	\end{flalign}
	Here, the first term in \eqref{eq:a43} can be written as  
	\begin{flalign}\label{eq:a38}
	&\mP(X_t=dx,A_t=a,X_{t+1}=dx',A_{t+1}=a'|\theta_{t-\tau},X_{t-\tau+1})\nn\\
	&=\int_{\substack{z_{t-1}\in\mR^N\\z_t\in\mR^N}}\mP(X_t=dx,A_t=a,X_{t+1}=dx',A_{t+1}=a',\theta_{t-1}=dz_{t-1},\theta_t=dz_t|\theta_{t-\tau},X_{t-\tau+1})\nn\\
	&=\int_{\mR^N}\int_{\mR^N}\mP(X_t=dx|\theta_{t-\tau},X_{t-\tau+1})\mP(\theta_{t-1}=dz_{t-1}|\theta_{t-\tau},X_{t-\tau+1},X_t=x)\nn\\
	&\quad \times \pi_{z_{t-1}}(a|x)\mP(\theta_t=dz_t|\theta_{t-\tau},X_{t-\tau+1},X_t=x,A_t=a,\theta_{t-1}=z_{t-1})\mathsf{P}(dx'|x,a)\pi_{z_t}(a'|x').
	\end{flalign}
	By the definition of the newly designed auxiliary Markov chain, the second term in \eqref{eq:a43} can be written as
	\begin{flalign}\label{eq:a39}
	&\mP(\tilde X_t=dx,\tilde A_t=a,\tilde X_{t+1}=dx',\tilde A_{t+1}=a'|\theta_{t-\tau},X_{t-\tau+1})\nn\\
	&=\mP(\tilde X_t=dx|\theta_{t-\tau},X_{t-\tau+1})
	\pi_{\theta_{t-\tau}}(a|x)
	\mathsf{P}(dx'| x,a)
	\pi_{\theta_{t-\tau}}(a'|x')\nn\\
	&=\int_{\substack{z_{t-1}\in\mR^N\\z_t\in\mR^N}}\mP(\tilde X_t=dx|\theta_{t-\tau},X_{t-\tau+1})
	\pi_{\theta_{t-\tau}}(a|x)
	\mathsf{P}(dx'| x,a)
	\pi_{\theta_{t-\tau}}(a'|x')\nn\\
	&\quad\times             \mP(\theta_{t-1}=dz_{t-1}|\theta_{t-\tau},X_{t-\tau+1},X_t=x)\mP(\theta_t=dz_t|\theta_{t-\tau},X_{t-\tau+1},X_t=x,A_t=a,\theta_{t-1}=z_{t-1})
	\end{flalign}
	Thus,  by plugging \eqref{eq:a38} and \eqref{eq:a39} into \eqref{eq:a43}, \eqref{eq:a43} can be further bounded as follows:
	\begin{flalign}\label{eq:a40}
	&\|\mP(O_t\in \cdot|\theta_{t-\tau},X_{t-\tau+1})-\mP(\tilde O_t\in \cdot|\theta_{t-\tau},X_{t-\tau+1})\|_{TV}\nn\\
	&=\frac{1}{2}\int_{\mathcal X}\sum_{\mathcal A}\int_{\mathcal X}\sum_{\mathcal A} \bigg|     \int_{\mR^N}\int_{\mR^N}\bigg(\mP(X_t=dx|\theta_{t-\tau},X_{t-\tau+1})\mP(\theta_{t-1}=dz_{t-1}|\theta_{t-\tau},X_{t-\tau+1},X_t=x)\nn\\
	&\quad \times \pi_{z_{t-1}}(a|x)\mP(\theta_t=dz_t|\theta_{t-\tau},X_{t-\tau+1},X_t=x,A_t=a,\theta_{t-1}=z_{t-1})\mathsf{P}(dx'|x,a)\pi_{z_t}(a'|x')\nn\\
	&\quad-\mP(\tilde X_t=dx|\theta_{t-\tau},X_{t-\tau+1})
	\pi_{\theta_{t-\tau}}(a|x)
	\mathsf{P}(dx'| x,a)
	\pi_{\theta_{t-\tau}}(a'|x')\nn\\
	&\quad\times             \mP(\theta_{t-1}=dz_{t-1}|\theta_{t-\tau},X_{t-\tau+1},X_t=x)\mP(\theta_t=dz_t|\theta_{t-\tau},X_{t-\tau+1},X_t=x,A_t=a,\theta_{t-1}=z_{t-1})\bigg)
	\bigg|\nn\\
	&\leq \frac{1}{2}\int_{\mathcal X}\sum_{\mathcal A}\int_{\mathcal X}\sum_{\mathcal A}  \int_{\mR^N}\int_{\mR^N}\bigg|    \bigg(\mP(X_t=dx|\theta_{t-\tau},X_{t-\tau+1})\mP(\theta_{t-1}=dz_{t-1}|\theta_{t-\tau},X_{t-\tau+1},X_t=x)\nn\\
	&\quad \times \pi_{z_{t-1}}(a|x)\mP(\theta_t=dz_t|\theta_{t-\tau},X_{t-\tau+1},X_t=x,A_t=a,\theta_{t-1}=z_{t-1})\mathsf{P}(dx'|x,a)\pi_{z_t}(a'|x')\nn\\
	&\quad -\mP(\tilde X_t=dx|\theta_{t-\tau},X_{t-\tau+1})
	\pi_{\theta_{t-\tau}}(a|x)
	\mathsf{P}(dx'| x,a)
	\pi_{\theta_{t-\tau}}(a'|x')\nn\\
	&\quad\times             \mP(\theta_{t-1}=dz_{t-1}|\theta_{t-\tau},X_{t-\tau+1},X_t=x)\mP(\theta_t=dz_t|\theta_{t-\tau},X_{t-\tau+1},X_t=x,A_t=a,\theta_{t-1}=z_{t-1})\bigg)
	\bigg|.
	\end{flalign}
	With a slight abuse of notations, let
	\begin{flalign}
	&M_1=\mP(X_t=dx|\theta_{t-\tau},X_{t-\tau+1})\mP(\theta_{t-1}=dz_{t-1}|\theta_{t-\tau},X_{t-\tau+1},X_t=x)\nn\\
	&\quad \times \pi_{z_{t-1}}(a|x)\mP(\theta_t=dz_t|\theta_{t-\tau},X_{t-\tau+1},X_t=x,A_t=a,\theta_{t-1}=z_{t-1})\mathsf{P}(dx'|x,a)\pi_{z_t}(a'|x'),\nn\\
	&M_2=\mP(\tilde X_t=dx|\theta_{t-\tau},X_{t-\tau+1})\mP(\theta_{t-1}=dz_{t-1}|\theta_{t-\tau},X_{t-\tau+1},X_t=x)
	\nn\\
	&\quad\times             \mP(\theta_t=dz_t|\theta_{t-\tau},X_{t-\tau+1},X_t=x,A_t=a,\theta_{t-1}=z_{t-1})\pi_{\theta_{t-\tau}}(a|x)
	\mathsf{P}(dx'| x,a)
	\pi_{\theta_{t-\tau}}(a'|x').
	\end{flalign}
	Thus, \eqref{eq:a40} can be written as
	\begin{flalign}
	&\|\mP(O_t\in \cdot|\theta_{t-\tau},X_{t-\tau+1})-\mP(\tilde O_t\in \cdot|\theta_{t-\tau},X_{t-\tau+1})\|_{TV}\nn\\
	&\leq \frac{1}{2}\int_{\mathcal X}\sum_{\mathcal A}\int_{\mathcal X}\sum_{\mathcal A} \int_{\mR^N}\int_{\mR^N}\big|   M_1-M_2\big|,		
	\end{flalign}
	This can be further bounded as follows
	\begin{flalign}\label{eq:a433}
	&\|\mP(O_t\in \cdot|\theta_{t-\tau},X_{t-\tau+1})-\mP(\tilde O_t\in \cdot|\theta_{t-\tau},X_{t-\tau+1})\|_{TV}\nn\\
	&\leq  \frac{1}{2}\int_{\mathcal X}\sum_{\mathcal A}\int_{\mathcal X}\sum_{\mathcal A}\int_{\mR^N}\int_{\mR^N}\big|M_1-M_3+M_3-M_2\big|\nn\\
	&\leq \frac{1}{2}\int_{\mathcal X}\sum_{\mathcal A}\int_{\mathcal X}\sum_{\mathcal A}\int_{\mR^N}\int_{\mR^N} \big|M_1-M_3\big|+\big|M_3-M_2\big|,
	\end{flalign}
	where 
	\begin{flalign}
	&M_3=\mP(\tilde X_t=dx|\theta_{t-\tau},X_{t-\tau+1})\mP(\theta_t=dz_t|\theta_{t-\tau},X_{t-\tau+1},X_t=x,A_t=a,\theta_{t-1}=z_{t-1})
	\nn\\
	&\quad\times             \mP(\theta_{t-1}=dz_{t-1}|\theta_{t-\tau},X_{t-\tau+1},X_t=x)\pi_{\theta_{t-\tau}}(a|x)
	\mathsf{P}(dx'| x,a)
	\pi_{z_t}(a'|x').
	\end{flalign}
	We first consider the second term in \eqref{eq:a433}:
	\begin{flalign}\label{eq:a45}
	&\frac{1}{2}\int_{\mathcal X}\sum_{\mathcal A}\int_{\mathcal X}\sum_{\mathcal A}\int_{\mR^N}\int_{\mR^N} \big|M_3-M_2\big|\nn\\
	&=\frac{1}{2}\int_{\mathcal X}\sum_{\mathcal A}\int_{\mathcal X}\sum_{\mathcal A}\int_{\mR^N}\int_{\mR^N}
	\bigg|
	\mP(\tilde X_t=dx|\theta_{t-\tau},X_{t-\tau+1})
	\pi_{\theta_{t-\tau}}(a|x)
	\mathsf{P}(dx'| x,a)
	\pi_{z_t}(a'|x')\nn\\
	&\quad\times             \mP(\theta_{t-1}=dz_{t-1}|\theta_{t-\tau},X_{t-\tau+1},X_t=x)\mP(\theta_t=dz_t|\theta_{t-\tau},X_{t-\tau+1},X_t=x,A_t=a,\theta_{t-1}=z_{t-1})\nn\\
	&\quad- \mP(\tilde X_t=dx|\theta_{t-\tau},X_{t-\tau+1})
	\pi_{\theta_{t-\tau}}(a|x)
	\mathsf{P}(dx'| x,a)
	\pi_{\theta_{t-\tau}}(a'|x')\nn\\
	&\quad\times             \mP(\theta_{t-1}=dz_{t-1}|\theta_{t-\tau},X_{t-\tau+1},X_t=x)\mP(\theta_t=dz_t|\theta_{t-\tau},X_{t-\tau+1},X_t=x,A_t=a,\theta_{t-1}=z_{t-1})
	\bigg|\nn\\
	&=\frac{1}{2}\int_{\mathcal X}\sum_{\mathcal A}\int_{\mathcal X}\sum_{\mathcal A}\int_{\mR^N}\int_{\mR^N}
	\mP(\tilde X_t=dx|\theta_{t-\tau},X_{t-\tau+1})
	\pi_{\theta_{t-\tau}}(a|x)
	\mathsf{P}(dx'| x,a)
	\nn\\
	&\quad\times             \mP(\theta_{t-1}=dz_{t-1}|\theta_{t-\tau},X_{t-\tau+1},X_t=x)\mP(\theta_t=dz_t|\theta_{t-\tau},X_{t-\tau+1},X_t=x,A_t=a,\theta_{t-1}=z_{t-1})\nn\\
	&\quad\times 
	\bigg|\pi_{z_t}(a'|x')-		\pi_{\theta_{t-\tau}}(a'|x')\bigg|\nn\\
	&=
	\int_{\mathcal X}\sum_{\mathcal A}\int_{\mathcal X}\int_{\mR^N}\int_{\mR^N}
	\mP(\tilde X_t=dx|\theta_{t-\tau},X_{t-\tau+1})
	\pi_{\theta_{t-\tau}}(a|x)
	\mathsf{P}(dx'| x,a)\nn\\
	&\quad\times             \mP(\theta_{t-1}=dz_{t-1}|\theta_{t-\tau},X_{t-\tau+1},X_t=x)\mP(\theta_t=dz_t|\theta_{t-\tau},X_{t-\tau+1},X_t=x,A_t=a,\theta_{t-1}=z_{t-1})\nn\\
	&\quad\times 
	\|\pi_{z_t}(\cdot|x')-		\pi_{\theta_{t-\tau}}(\cdot|x')\|_{TV}\nn\\
	&\leq\int_{\mathcal X}\sum_{\mathcal A}\int_{\mathcal X}\int_{\mR^N}\int_{\mR^N}
	\mP(\tilde X_t=dx|\theta_{t-\tau},X_{t-\tau+1})
	\pi_{\theta_{t-\tau}}(a|x)
	\mathsf{P}(dx'| x,a)\nn\\
	&\quad\times             \mP(\theta_{t-1}=dz_{t-1}|\theta_{t-\tau},X_{t-\tau+1},X_t=x)\mP(\theta_t=dz_t|\theta_{t-\tau},X_{t-\tau+1},X_t=x,A_t=a,\theta_{t-1}=z_{t-1})\nn\\
	&\quad\times 
	\sup_{x'\in\mathcal X}\|\pi_{z_t}(\cdot|x')-		\pi_{\theta_{t-\tau}}(\cdot|x')\|_{TV}
	\end{flalign}
	
	Recalling that 
	\begin{flalign}\label{eq:aaaaa}
	\ltwo{\theta_t-\theta_{t-\tau}}\leq  \sum_{i=t-\tau}^{t-1}\ltwo{\theta_{i+1}-\theta_{i}}\leq G\sum_{i=t-\tau}^{t-1}\alpha_i,
	\end{flalign}
	and by the Lipschitz condition in \eqref{eq:Lpi}, it follows that for any $x'\in\mathcal X$,
	\begin{flalign}
	\|\pi_{\theta_t}(\cdot|x')- \pi_{\theta_{t-\tau}}(\cdot|x')\|_{TV}\leq C|\mca|G\sum_{i=t-\tau}^{t-1}\alpha_i.
	\end{flalign}
	Thus, for any $z_t$ such that $z_tdz_t$ has non-zero measure, i.e., $\theta_{t-1}=z_{t-1}$ satisfies \eqref{eq:aaaaa}, it can be shown that
	\begin{flalign}
	\sup_{x'\in\mathcal X}\|\pi_{z_t}(\cdot|x')-		\pi_{\theta_{t-\tau}}(\cdot|x')\|_{TV}\leq C|\mca|G\sum_{i=t-\tau}^{t-1}\alpha_i,
	\end{flalign}
	which further implies that \eqref{eq:a45} can be further bounded by $C|\mca|G\sum_{i=t-\tau}^{t-1}\alpha_i$.
	

	We then consider the first term in \eqref{eq:a433}:
	\begin{flalign}\label{eq:a48}
	&\frac{1}{2}\int_{\mathcal X}\sum_{\mathcal A}\int_{\mathcal X}\sum_{\mathcal A}\int_{\mR^N}\int_{\mR^N} \big|M_1-M_3\big|\nn\\
	&=\frac{1}{2}\int_{\mathcal X}\sum_{\mathcal A}\int_{\mathcal X}\sum_{\mathcal A}\int_{\mR^N}\int_{\mR^N} \bigg|\mP(X_t=dx|\theta_{t-\tau},X_{t-\tau+1})\mP(\theta_{t-1}=dz_{t-1}|\theta_{t-\tau},X_{t-\tau+1},X_t=x)\nn\\
	&\quad \times \pi_{z_{t-1}}(a|x)\mP(\theta_t=dz_t|\theta_{t-\tau},X_{t-\tau+1},X_t=x,A_t=a,\theta_{t-1}=z_{t-1})\mathsf{P}(dx'|x,a)\pi_{z_t}(a'|x')\nn\\
	&\quad- \mP(\tilde X_t=dx|\theta_{t-\tau},X_{t-\tau+1})\mP(\theta_{t-1}=dz_{t-1}|\theta_{t-\tau},X_{t-\tau+1},X_t=x)
	\nn\\
	&\quad\times       \pi_{\theta_{t-\tau}}(a|x)
	\mathsf{P}(dx'| x,a)
	\pi_{z_t}(a'|x')
	\mP(\theta_t=dz_t|\theta_{t-\tau},X_{t-\tau+1},X_t=x,A_t=a,\theta_{t-1}=z_{t-1})\bigg|\nn\\
	&=\frac{1}{2}\int_{\mathcal X}\sum_{\mathcal A}\int_{\mathcal X}\sum_{\mathcal A}\int_{\mR^N}\int_{\mR^N}\pi_{z_t}(a'|x')\mathsf{P}(dx'|x,a)
	\mP(\theta_{t-1}=dz_{t-1}|\theta_{t-\tau},X_{t-\tau+1},X_t=x)\nn\\
	&\quad\times
	\mP(\theta_t=dz_t|\theta_{t-\tau},X_{t-\tau+1},X_t=x,A_t=a,\theta_{t-1}=z_{t-1})
	\nn\\
	&\quad\times \bigg|\mP(X_t=dx|\theta_{t-\tau},X_{t-\tau+1})\pi_{z_{t-1}}(a|x)
	-\mP(\tilde X_t=dx|\theta_{t-\tau},X_{t-\tau+1})
	\pi_{\theta_{t-\tau}}(a|x)
	\bigg|\nn\\
	&=\frac{1}{2}\int_{\mathcal X}\sum_{\mathcal A} \int_{\mR^N}
	\mP(\theta_{t-1}=dz_{t-1}|\theta_{t-\tau},X_{t-\tau+1},X_t=x) \bigg|\mP(X_t=dx|\theta_{t-\tau},X_{t-\tau+1})\pi_{z_{t-1}}(a|x)\nn\\
	&\quad
	-\mP(\tilde X_t=dx|\theta_{t-\tau},X_{t-\tau+1})
	\pi_{\theta_{t-\tau}}(a|x)
	\bigg|
	\end{flalign}
	
	To further bound \eqref{eq:a48}, we play a similar trick as the one in \eqref{eq:a433}. It follows that
	\begin{flalign}\label{eq:a49}
	&\frac{1}{2}\int_{\mathcal X}\sum_{\mathcal A} \int_{\mR^N}
	\mP(\theta_{t-1}=dz_{t-1}|\theta_{t-\tau},X_{t-\tau+1},X_t=x) 
	\bigg|\mP(X_t=dx|\theta_{t-\tau},X_{t-\tau+1})\pi_{z_{t-1}}(a|x)\nn\\
	&\quad
	-\mP(\tilde X_t=dx|\theta_{t-\tau},X_{t-\tau+1})
	\pi_{\theta_{t-\tau}}(a|x)
	\bigg|\nn\\
	&\leq \frac{1}{2}\int_{\mathcal X}\sum_{\mathcal A}\int_{\mR^N}
	\mP(\theta_{t-1}=dz_{t-1}|\theta_{t-\tau},X_{t-\tau+1},X_t=x)\bigg(
	\bigg|\mP(X_t=dx|\theta_{t-\tau},X_{t-\tau+1})\pi_{z_{t-1}}(a|x)\nn\\
	&\quad -\mP(\tilde X_t=dx|\theta_{t-\tau},X_{t-\tau+1})\pi_{z_{t-1}}(a|x)\bigg|\nn\\
	&\quad +\bigg|\mP(\tilde X_t=dx|\theta_{t-\tau},X_{t-\tau+1})\pi_{z_{t-1}}(a|x)
	-\mP(\tilde X_t=dx|\theta_{t-\tau},X_{t-\tau+1})
	\pi_{\theta_{t-\tau}}(a|x)
	\bigg|\bigg)\nn\\
	&=\frac{1}{2}\int_{\mathcal X}\bigg|\mP(X_t=dx|\theta_{t-\tau},X_{t-\tau+1})-\mP(\tilde X_t=dx|\theta_{t-\tau},X_{t-\tau+1})\bigg|\nn\\
	&\quad +\frac{1}{2}\int_{\mathcal X}\sum_{\mathcal A}\int_{\mR^N}\mP(\theta_{t-1}=dz_{t-1}|\theta_{t-\tau},X_{t-\tau+1},X_t=x)
	\mP(\tilde X_t=dx|\theta_{t-\tau},X_{t-\tau+1})\nn\\
	&\quad\times \bigg|\pi_{z_{t-1}}(a|x)
	-
	\pi_{\theta_{t-\tau}}(a|x)
	\bigg|\nn\\
	&\leq \frac{1}{2}\int_{\mathcal X}\bigg|\mP(X_t=dx|\theta_{t-\tau},X_{t-\tau+1})-\mP(\tilde X_t=dx|\theta_{t-\tau},X_{t-\tau+1})\bigg|\nn\\
	&\quad +\int_{\mathcal X}\int_{\mR^N}\mP(\theta_{t-1}=dz_{t-1}|\theta_{t-\tau},X_{t-\tau+1},X_t=x)
	\mP(\tilde X_t=dx|\theta_{t-\tau},X_{t-\tau+1})\nn\\
	&\quad\times \sup_{x\in\mcx}\|\pi_{z_{t-1}}(\cdot|x)
	-
	\pi_{\theta_{t-\tau}}(\cdot|x)
	\|_{TV}
	\end{flalign}
	The first term in \eqref{eq:a49} is 
	\begin{flalign}
	\|\mP(X_t\in\cdot|\theta_{t-\tau},X_{t-\tau+1})- \mP(\tilde X_t\in\cdot|\theta_{t-\tau},X_{t-\tau+1})  \|_{TV}.
	\end{flalign}
	The second term can be bounded similar to \eqref{eq:a45}. More specifically, recalling that 
	\begin{flalign}
	\ltwo{\theta_t-\theta_{t-\tau}}\leq  \sum_{i=t-\tau}^{t-1}\ltwo{\theta_{i+1}-\theta_{i}}\leq G\sum_{i=t-\tau}^{t-1}\alpha_i,
	\end{flalign}
	and by the Lipschitz condition in \eqref{eq:Lpi}, it follows that for any $x\in\mathcal X$ and any $z_{t-1}$ such that $z_{t-1}dz_{t-1}$ has non-zero measure, then
	\begin{flalign}
	\|\pi_{z_{t-1}}(\cdot|x)-\pi_{\theta_{t-\tau}}(\cdot|x)\|_{TV}\leq C|\mca|\ltwo{z_{t-1}-\theta_{t-\tau}}\leq C|\mca|G\sum_{i=t-\tau}^{t-2}\alpha_i.
	\end{flalign}
	Thus, \eqref{eq:a49} is upper bounded by
	\begin{flalign}
	\|\mP( X_t\in \cdot|\theta_{t-\tau},X_{t-\tau+1})- \mP(\tilde X_t\in \cdot|\theta_{t-\tau},X_{t-\tau+1})  \|_{TV}+ C|\mca|G\sum_{i=t-\tau}^{t-2}\alpha_i
	\end{flalign}

	\section{Proof of Theorem \ref{thm:sarsa}}
	We decompose the error as follows,
	\begin{flalign}\label{eq:errordecompose_sarsa }
	\mE[&\ltwo{\theta_{t+1}-\theta^*}^2]\nn\\
	&=\mE[\ltwo{\proj(\theta_t+\alpha_t g_t(\theta_t))-\proj(\theta^*)}^2]\nn\\
	&\leq \mE[\ltwo{\theta_t+\alpha_t g_t(\theta_t)-\theta^*}^2]\nn\\
	&=\mE[\ltwo{\theta_t-\theta^*}^2+\alpha_t^2\ltwo{ g_t(\theta_t)}^2 + 2\alpha_t \langle\theta_t-\theta^*,g_t(\theta_t)\rangle]\nn\\
	&=\mE[\ltwo{\theta_t-\theta^*}^2+\alpha_t^2\ltwo{ g_t(\theta_t)}^2 + 2\alpha_t \langle\theta_t-\theta^*,\bar g(\theta_t)-\bar g(\theta^*)\rangle+2\alpha_t\mathbf\Lambda_t(\theta_t)],
	\end{flalign}
	where the inequality is due to the fact that orthogonal projections onto a convex set are non-expansive, and the last step is due to the fact that $\bar g(\theta^*)=0$. Applying  Lemmas \ref{lemma:a8} and \ref{lemma:a10}  implies that
	\begin{flalign}
	&\mE[\ltwo{\theta_{t+1}-\theta^*}^2]\leq \mE[(1-2\alpha_tw_s)\ltwo{\theta_t-\theta^*}^2 + \alpha_t^2 G^2  +2\alpha_t\mathbf\Lambda_t(\theta_t)],
	\end{flalign}
	which further implies that
	\begin{flalign}
	\label{eq:a58}
	w(t+1)\mE[\ltwo{\theta_{t+1}-\theta^*}^2]\leq \mE[tw\ltwo{\theta_t-\theta^*}^2 + \frac{1}{2}\alpha_t G^2  +\mathbf\Lambda_t(\theta_t)].
	\end{flalign}
	Applying \eqref{eq:a58} recursively and Lemma \ref{lemma:a13} (with $\tau=\tau_0$) yields that
	\begin{flalign}
	&wT\mE[\ltwo{\theta_{T}-\theta^*}^2]\nn\\
	&\leq \sum_{t=0}^{T-1}\left(\frac{1}{2}\alpha_tG^2 + \mE[\mathbf\Lambda_t(\theta_t)]\right)\nn\\
	&\leq \frac{(\log T+1)G^2}{4w}+ \sum_{t=0}^{\tau_0}\mE[\mathbf\Lambda_t(\theta_t)] + \sum_{t=\tau_0+1}^{T-1}\mE[\mathbf\Lambda_t(\theta_t)]\nn\\
	&\leq \frac{(\log T+1)G^2}{4w}+2G^2(\tau_0+1)  + \frac{(C|\mca|G^3\tau_0^2+(6+\lambda C)G^2\tau_0/2)(\log T +1) }{w} +\frac{2G^2}{\rho w}\nn\\
	&=\frac{G^2(4C|\mca|G\tau_0^2+(12+2\lambda C)\tau_0+1)(\log T +1)}{{4w}} + \frac{2G^2(\tau_0w+w+\rho^{-1})}{{w}},
	\end{flalign}
	which completes the proof.
	
	\section{Proof of Lemma \ref{lemma:Radious_sarsa}}
	Consider $A_{\theta}=\mE_{\theta}[\phi(X,A)(\gamma\phi^T(Y,B)\theta-\phi^T(X,A)\theta)]$, and $b_{\theta}=\mE_{\theta}[\phi(X,A)r(X,A)]$ as defined in Section \ref{sec:231} with respect to $\theta$. It has been shown that for any $\theta\in\mR^N$, $A_\theta$ is negative definite \cite{perkins2003convergent,Tsitsiklis1997}. Denote by $w_l$  the largest eigenvalue of $\frac{1}{2}(A_{\theta^*}+A_{\theta^*}^T)$.
	Recall that the limit point $\theta^*$ satisfies the following relationship $-A_{\theta^*}\theta^*=b_{\theta^*}$ (Theorem 2  \cite{melo2008analysis}). It then follows that
	$
	-(\theta^*)^TA_{\theta^*}\theta^*=(\theta^*)^Tb_{\theta^*}
	\text{, which implies } -w_l \ltwo{\theta^*}^2\leq \ltwo{\theta^*}r_{\max}.$
	Thus, $\ltwo{\theta^*}\leq -\frac{r_{\max}}{w_l},
	$
	which completes the proof.

	\section{Proof of Theorem \ref{thm:sarsa_constantstepsize}}\label{proof:thm2}
	The proof for Theorem \ref{thm:sarsa_constantstepsize} with constant step size is similar to the proof of Theorem \ref{thm:sarsa}. In the following, we mainly outline the difference between them.

	First, we obtain the following results from Lemma \ref{lemma:a13} by letting $\tau=t$ if $t\leq \tau_0$, and $\tau=\tau_0$ if $t>\tau_0$:
	\begin{flalign}\label{eq:52a}
	\mE[\mathbf\Lambda_t(\theta_t)]\leq (6+\lambda C)G^2\tau_0\alpha_0 +4G^2\alpha_0/\rho+2G^3C|\mca|\tau_0^2\alpha_0,
	\end{flalign}
	where $\tau_0=\inf\{ t\geq 0:m\rho^t\leq \alpha_0 \}$.
	
	We then decompose the error following similar steps to those in \eqref{eq:errordecompose_sarsa }, and we obtain that
	\begin{flalign}\label{a:51}
	\mE[&\ltwo{\theta_{t+1}-\theta^*}^2]\nn\\
	&\leq\mE[\ltwo{\theta_t-\theta^*}^2+\alpha_0^2\ltwo{ g_t(\theta_t)}^2 + 2\alpha_t \langle\theta_t-\theta^*,\bar g(\theta_t)-\bar g(\theta^*)\rangle+2\alpha_0\mathbf\Lambda_t(\theta_t)]
	\nn\\
	&\leq \mE[\ltwo{\theta_t-\theta^*}^2+\alpha_0^2G^2-2\alpha_0w_s\|\theta_t-\theta^*\|_2^2 +2\alpha_0 \mathbf\Lambda_t(\theta_t)]\nn\\
	&=\mE[(1-2\alpha_0w_s)\ltwo{\theta_t-\theta^*}^2+\alpha_0^2G^2 +2\alpha_0 \mathbf\Lambda_t(\theta_t)]
	\end{flalign} 
	where the last inequality follows from Lemma \ref{lemma:a10}.
	Applying \eqref{a:51} recursively with \eqref{eq:52a}, we obtain Theorem \ref{thm:sarsa_constantstepsize}.

	\section{Proof of Theorem \ref{thm:generalsarsa}}\label{proof:thm3}
	
	Before we start the proof of Theorem \ref{thm:generalsarsa}, we first present the following lemma. 
	\begin{lemma}\label{lemma:8}
		Consider a non-increasing step-size sequence $\alpha_0\geq \alpha_1...\geq \alpha_T$. For any fixed $\tau$, and  $tB\leq i\leq (t+1)B-1$, if $tB\leq \tau$, then we have that 
		\begin{flalign}
		\mE[\langle\theta_{tB}-\theta^*, g_i(\theta_i)-\bar g(\theta_{tB})\rangle]\leq 2G^2;
		\end{flalign}
		and if $tB>\tau$, then
		\begin{flalign}
		\mE&[\langle\theta_{tB}-\theta^*, g_i(\theta_i)-\bar g(\theta_{tB})\rangle]\nn\\
		&\leq (6+\lambda C)G^2 (\tau+B)\alpha_{tB-\tau}+4G^2m\rho^{\tau-1}+C|\mca|G^3\tau^2\alpha_{tB-\tau}.
		\end{flalign}
	\end{lemma}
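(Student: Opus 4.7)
The lemma is the fitted-SARSA counterpart of Lemma \ref{lemma:a13}: the cross-term $\langle\theta_{tB}-\theta^*,\,g_i(\theta_i)-\bar g(\theta_{tB})\rangle$ now carries two biases---an iterate drift $\theta_i\neq\theta_{tB}$ across the $B$ inner TD steps, and the Markovian non-stationarity of the sample underlying $g_i$. When $tB\le\tau$ the bound $2G^2$ is immediate from Cauchy--Schwarz, Lemma \ref{lemma:a8} (which gives $\|g_i(\theta_i)\|_2,\|\bar g(\theta_{tB})\|_2\le G$), and $\|\theta_{tB}-\theta^*\|_2\le 2R\le G$, exactly as in Lemma \ref{lemma:a7}. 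The substantive case is $tB>\tau$, where I would mimic the three-step strategy of Lemma \ref{lemma:a13}, but with the decoupling reference point chosen as $\theta_{tB-\tau}$ rather than $\theta_{t-\tau}$.

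First, I would write
\[
\langle\theta_{tB}-\theta^*,\,g_i(\theta_i)-\bar g(\theta_{tB})\rangle
=\langle\theta_{tB-\tau}-\theta^*,\,g_i(\theta_{tB-\tau})-\bar g(\theta_{tB-\tau})\rangle+E,
\]
where $E$ collects the swap errors from replacing $\theta_{tB}\mapsto\theta_{tB-\tau}$ in the left factor, $\theta_i\mapsto\theta_{tB-\tau}$ inside $g_i$, and $\theta_{tB}\mapsto\theta_{tB-\tau}$ inside $\bar g$. The projection step and Lemma \ref{lemma:a8} give $\|\theta_a-\theta_b\|_2\le G\sum_{j=\min(a,b)}^{\max(a,b)-1}\alpha_j$; since all indices lie in $[tB-\tau,(t+1)B-1]$, monotonicity of $\alpha$ bounds this by $G(\tau+B)\alpha_{tB-\tau}$. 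Combining with the Lipschitz constants already derived in the proof of Lemma \ref{lemma:LipSarsa} yields $|E|\le(6+\lambda C)G^2(\tau+B)\alpha_{tB-\tau}$, which is the first term in the statement.

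For the leading term I would reuse the auxiliary-chain idea of Lemma \ref{lemma:a13}: construct a chain $\{\tilde X_k,\tilde A_k\}$ that agrees with the fitted-SARSA trajectory through $X_{tB-\tau+1}$ but thereafter applies the fixed policy $\pi_{\theta_{tB-\tau}}$. Assumption \ref{assumtion1} then places the auxiliary marginal at step $i\ge tB$ within total variation $m\rho^{\,i-(tB-\tau+1)}\le m\rho^{\tau-1}$ of $\mathsf P_{\theta_{tB-\tau}}$, which is precisely the measure defining $\bar g(\theta_{tB-\tau})$; this delivers the $4G^2 m\rho^{\tau-1}$ term exactly as in Step 2 of Lemma \ref{lemma:a13}. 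The last piece is the total-variation gap between the actual observation $O_i$ and its auxiliary copy $\tilde O_i$, which I would estimate by rerunning the recursive telescoping of Appendix \ref{app:b}; the per-step Lipschitz bound $\|\pi_{\theta_j}-\pi_{\theta_{tB-\tau}}\|_{TV}\le C|\mca|\,G(j-(tB-\tau))\alpha_{tB-\tau}$ holds uniformly on the window, and summing over the $\tau+B-1$ indices produces the $C|\mca|G^3\tau^2\alpha_{tB-\tau}$ contribution.

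The principal obstacle is adapting the TV-divergence telescoping of Appendix \ref{app:b} to the block structure of fitted SARSA: in the original algorithm the behavior policy changes at every step, whereas here it is piecewise constant on blocks of length $B$ and changes only at multiples of $B$. However, since the Lipschitz-in-$\theta$ bound on $\pi_\theta$ depends only on the cumulative iterate magnitude, and the cumulative drift over the window is still of order $G\tau\alpha_{tB-\tau}$, the recursion goes through essentially verbatim---the schedule-specific factor $\frac{1}{w}\log\frac{t}{t-\tau}$ that appears in Lemma \ref{lemma:a13} is here replaced by the schedule-agnostic $\tau\alpha_{tB-\tau}$, which is precisely why the lemma is stated for a generic non-increasing step-size sequence.
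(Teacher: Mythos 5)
Your proposal follows essentially the same route as the paper's proof: the $tB\le\tau$ case via Cauchy--Schwarz and the gradient bound, and for $tB>\tau$ the same three-way swap to the reference point $\theta_{tB-\tau}$ (left factor, argument of $g_i$, argument of $\bar g$) bounded by the drift and the Lipschitz constants of Lemma \ref{lemma:LipSarsa}, followed by the same auxiliary chain with frozen policy $\pi_{\theta_{tB-\tau}}$ yielding the $4G^2m\rho^{\tau-1}$ mixing term and the $C|\mca|G^3\tau^2\alpha_{tB-\tau}$ total-variation drift term. The only (immaterial) difference is in the bookkeeping of how many indices the TV telescoping sums over, which affects constants by at most a factor of two.
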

	\begin{proof}

		For any fixed $\tau$, if $tB\leq  \tau$, then we use the following upper bound:
		\begin{flalign}
		\mE[\langle\theta_{tB}-\theta^*, g_i(\theta_i)-\bar g(\theta_{tB})\rangle]\leq  4RG\leq 2G^2,
		\end{flalign}
		and if $tB>\tau$, then for any $tB\leq i\leq (t+1)B-1$, it follows that
		\begin{flalign}\label{eq:55}
		\mE&[\langle\theta_{tB}-\theta^*, g_i(\theta_i)-\bar g(\theta_{tB})\rangle]\nn\\
		&=\mE[\langle\theta_{tB-\tau}-\theta^*, g_i(\theta_{tB-\tau})-\bar g(\theta_{tB-\tau})\rangle] \nn\\
		&\quad + \mE[\langle\theta_{tB}-\theta^*, g_i(\theta_i)-\bar g(\theta_{tB})\rangle]-\mE[\langle\theta_{tB-\tau}-\theta^*, g_i(\theta_{tB-\tau})-\bar g(\theta_{tB-\tau})\rangle].
		\end{flalign}
		The difference between the second and third terms in \eqref{eq:55} can be bounded as follows:
		\begin{flalign}
		\mE&[\langle\theta_{tB}-\theta^*, g_i(\theta_i)-\bar g(\theta_{tB})\rangle]-\mE[\langle\theta_{tB-\tau}-\theta^*, g_i(\theta_{tB-\tau})-\bar g(\theta_{tB-\tau})\rangle] \nn\\
		&=\mE[\langle\theta_{tB}-\theta^*, g_i(\theta_i)-\bar g(\theta_{tB})\rangle] - \mE[\langle\theta_{tB}-\theta^*, g_i(\theta_{tB-\tau})-\bar g(\theta_{tB-\tau})\rangle]\nn\\
		&\quad +\mE[\langle\theta_{tB}-\theta^*, g_i(\theta_{tB-\tau})-\bar g(\theta_{tB-\tau})\rangle]-\mE[\langle\theta_{tB-\tau}-\theta^*, g_i(\theta_{tB-\tau})-\bar g(\theta_{tB-\tau})\rangle]\nn\\
		&=\mE[\langle\theta_{tB}-\theta^*, g_i(\theta_i)-g_i(\theta_{tB-\tau})\rangle] - \mE[\langle\theta_{tB}-\theta^*, \bar g(\theta_{tB})-\bar g(\theta_{tB-\tau})\rangle]\nn\\
		&\quad +\mE[\langle\theta_{tB}-\theta_{tB-\tau}, g_i(\theta_{tB-\tau})-\bar g(\theta_{tB-\tau})\rangle]\nn\\
		&\overset{(a)}{\leq} 2R(1+\gamma)\ltwo{\theta_i-\theta_{tB-\tau}}+2R(1+\gamma+\lambda C)\ltwo{\theta_{tB}-\theta_{tB-\tau}}+2G\ltwo{\theta_{tB}-\theta_{tB-\tau}}\nn\\
		&\leq 2G^2\sum_{j=tB-\tau}^{i-1}\alpha_j+(4+\lambda C)G^2\sum_{j=tB-\tau}^{tB-1}\alpha_j\nn\\
		&\overset{(b)}{\leq} (6+\lambda C)G^2 (\tau+B)\alpha_{tB-\tau}.
		\end{flalign}
		Here the step (a) in the above equation follows similarly to those in \eqref{eq:aaa222} and \eqref{aeq:27}, and (b) is due to the fact that the sequence $\alpha_j$'s is non-increasing.

		Next, we consider the first term in \eqref{eq:55}, which can be bounded using similar steps as in the proof of Lemma \ref{lemma:a13}. In particular, let $\tau=n_1B$, for some positive integer $n_1$. We then design an auxiliary Markov chain following a fixed policy given by $\mg(\phi^T\theta_{tB-\tau})$. 
		It can be shown that 
		\begin{flalign}
		\mE&[\langle\theta_{tB-\tau}-\theta^*, g_i(\theta_{tB-\tau})-\bar g(\theta_{tB-\tau})\rangle]\nn\\
		&\leq 4G^2m\rho^{\tau-1}+2C|\mca|G^3 \sum_{j=tB-\tau}^{tB-1}\sum_{i=tB-\tau}^j\alpha_i\nn\\
		&\leq 4G^2m\rho^{\tau-1}+C|\mca|G^3\tau^2\alpha_{tB-\tau}.
		\end{flalign}
		This completes the proof. 	
	\end{proof}
	
	Now we are ready to prove Theorem \ref{thm:generalsarsa}.
	Following similar steps to those in the proof of Theorem \ref{thm:sarsa}, the error at time $B(t+1)$ can be decomposed as follows:
	\begin{flalign}\label{eq:52}
	\mE[&\ltwo{\theta_{(t+1)B}-\theta^*}^2]\nn\\
	&\leq \mE\left[\ltwo{\theta_{tB} +\sum_{i=tB}^{(t+1)B-1}\alpha_i g_i(\theta_i)-\theta^*}^2\right]\nn\\
	&=\mE\left[\ltwo{\theta_{tB}-\theta^*}^2+ 2\sum_{i=tB}^{(t+1)B-1}\alpha_i \langle\theta_{tB}-\theta^*,g_i(\theta_i)\rangle   + \ltwo{\sum_{i=tB}^{(t+1)B-1}\alpha_ig_i(\theta_i)}^2\right].
	\end{flalign}
	The third term in \eqref{eq:52} can be upper bounded as follows:
	\begin{flalign}
	\ltwo{\sum_{i=tB}^{(t+1)B-1}\alpha_ig_i(\theta_i)}^2\leq \left(\sum_{i=tB}^{(t+1)B-1}\alpha_iG\right)^2\leq B^2G^2\alpha_{tB}^2.
	\end{flalign}
	
	We then consider the second term in \eqref{eq:52}. It can be shown that
	\begin{flalign}\label{eq:54}
	\mE&[\langle\theta_{tB}-\theta^*,g_i(\theta_i)\rangle]\nn\\
	&=\mE[\langle\theta_{tB}-\theta^*,\bar g(\theta_{tB})-\bar g(\theta^*)\rangle]+\mE[\langle\theta_{tB}-\theta^*, g_i(\theta_i)-\bar g(\theta_{tB})\rangle],
	\end{flalign}
	which is due to the fact that $\bar g(\theta^*)=0$. The first term in the above equation can be upper bounded using Lemma \ref{lemma:a10}, i.e., 
	\begin{flalign}
	\mE[\langle\theta_{tB}-\theta^*,\bar g(\theta_{tB})-\bar g(\theta^*)\rangle]\leq -w_s\mE\ltwo{\theta_{tB}-\theta^*}^2.
	\end{flalign}
	The second term in \eqref{eq:54} can be bounded using Lemma \ref{lemma:8}.

	It then follows that 
	\begin{flalign}\label{eq:56}
	\mE[\ltwo{\theta_{(t+1)B}-\theta^*}^2]
	&\leq \mE[(1-2w_sB\alpha_{(t+1)B})\ltwo{\theta_{tB}-\theta^*}^2]+B^2G^2\alpha_{tB}^2\nn\\
	&\quad+2\sum_{i=tB}^{(t+1)B-1}\alpha_i
	\mE[\langle\theta_{tB}-\theta^*, g_i(\theta_i)-\bar g(\theta_{tB})\rangle].
	\end{flalign}
	
	For the case with diminishing step size, i.e., $\alpha_t=\frac{1}{2tw}$, it follows that
	\begin{flalign}
	(t+1)\mE[\ltwo{\theta_{(t+1)B}-\theta^*}^2]
	&\leq t\mE[\ltwo{\theta_{tB}-\theta^*}^2] +B^2G^2\alpha_{tB}^2(t+1)\nn\\
	&\quad+ 2(t+1)\sum_{i=tB}^{(t+1)B-1}\alpha_i
	\mE[\langle\theta_{tB}-\theta^*, g_i(\theta_i)-\bar g(\theta_{tB})\rangle].
	\end{flalign}
	Applying the above inequality recursively, we obtain that for any $\tau=n_1B>0$ where $n_1$ is  some positive integer,
	\begin{flalign}
	&T\mE[\ltwo{\theta_{TB}-\theta^*}^2]\nn\\
	&\leq \sum_{t=0}^{T-1}\left(B^2G^2\alpha_{tB}^2(t+1)+
	2(t+1)\sum_{i=tB}^{(t+1)B-1}\alpha_i
	\mE[\langle\theta_{tB}-\theta^*, g_i(\theta_i)-\bar g(\theta_{tB})\rangle]\right)\nn\\
	&\leq \frac{G^2}{2w^2}(\log T+2) + \frac{4G^2(\tau+B)}{wB} + \frac{(6+\lambda C
		)G^2(\tau+B)+C|\mca|G^3\tau^2}{Bw^2}(\log T+1)\nn\\
	&\quad+\frac{8G^2m\rho^{\tau-1}T}{w},
	\end{flalign}
	where we let $\alpha_0=\frac{1}{\sqrt 2 Bw}$. If we further let $\tau=\tau_0=\inf\{nB:m\rho^{nB}\leq\alpha_{TB}\}$, then it follows that
	\begin{flalign}
	&\mE[\ltwo{\theta_{TB}-\theta^*}^2]\nn\\
	&\le \Big(4G^ 2(\tau_0+B)w+(\log T+1)((6+\lambda C)G^2\tau_0+(6.5+\lambda C)G^2B\nn\\
	&\quad+C|\mca|G^3\tau_0^2)+4G^2/\rho+0.5BG^2\Big)\big/ \Big(w^2BT\Big).
	\end{flalign}
	
	For the case with constant step size, i.e., $\alpha_t=\alpha_0<\frac{1}{2w_sB}$, 
	we first show that 
	\begin{flalign}\label{eq:68}
	\mE&[\langle\theta_{tB}-\theta^*, g_i(\theta_i)-\bar g(\theta_{tB})\rangle]\nn\\
	&\leq (6+\lambda C)G^2 (\tau_0+B)\alpha_{0}+4G^2m\rho^{\tau_0-1}+C|\mca|G^3\tau_0^2\alpha_{0},
	\end{flalign}
	by letting $\tau=tB$ if $tB\leq \tau_0$, and $\tau=\tau_0$ if $tB>\tau_0$ in Lemma \ref{lemma:8}.
	Then,  applying \eqref{eq:56}  recursively, we obtain that
	\begin{flalign}
	\mE&[\ltwo{\theta_{TB}-\theta^*}^2]\nn\\
	&\leq e^{-2w_sB\alpha_0T}\ltwo{\theta_{0}-\theta^*}^2 + \frac{\alpha_0(BG^2+2(6+\lambda C)G^2(\tau_0+B)+8G^2/\rho+2|\mca|G^3\tau_0^2)}{2w_s}.
	\end{flalign}

\end{document}